\newtheorem{theorem}{Theorem}
\definecolor{LightPink}{RGB}{255,220,220}
\definecolor{LightGreen}{RGB}{220,255,220}
\definecolor{DeepPink}{RGB}{255,150,150}
\definecolor{LimeGreen}{RGB}{120,200,120}
\renewcommand{\paragraph}[1]{\vspace{1.25mm}\noindent\textbf{#1}}
\definecolor{cvprblue}{rgb}{0.21,0.49,0.74}
\title{Training Data Provenance Verification: Did Your Model Use Synthetic Data from My Generative Model for Training?}
\author{
 Yuechen Xie\textsuperscript{\rm 1},
 Jie Song\textsuperscript{\rm 1}\thanks{Corresponding author},
 Huiqiong Wang\textsuperscript{\rm 2},
 Mingli Song\textsuperscript{\rm 1,3,4} \\[2mm]
 $^1$Zhejiang University \\
 $^2$Ningbo Innovation Center, Zhejiang University\\
 $^3$State Key Laboratory of Blockchain and Security, Zhejiang University \\
 $^4$Hangzhou High-Tech Zone (Binjiang) Institute of Blockchain and Data Security \\[2mm]
 {\tt\small \{xyuechen,sjie,huiqiong\_wang,brooksong\}@zju.edu.cn}
}
\begin{document}
\maketitle
\begin{abstract}
High-quality open-source text-to-image models have lowered the threshold for obtaining photorealistic images significantly, but also face potential risks of misuse. Specifically, suspects may use synthetic data generated by these generative models to train models for specific tasks without permission, when lacking real data resources especially. Protecting these generative models is crucial for the well-being of their owners. In this work, we propose the first method to this important yet unresolved issue, called Training data Provenance Verification (TrainProVe). The rationale behind TrainProVe is grounded in the principle of generalization error bound, which suggests that, for two models with the same task, if the distance between their training data distributions is smaller, their generalization ability will be closer. We validate the efficacy of TrainProVe across four text-to-image models (Stable Diffusion v1.4, latent consistency model, PixArt-$\alpha$, and Stable Cascade). The results show that TrainProVe achieves a verification accuracy of over 99\% in determining the provenance of suspicious model training data, surpassing all previous methods. Code is available at \url{https://github.com/xieyc99/TrainProVe}.
\end{abstract}    
\section{Introduction}
\label{sec:intro}

High-quality open-source text-to-image models based on diffusion models~\cite{ho2020denoising} can generate photorealistic images with simple text prompts~\cite{chenpixart,luo2023latent,perniaswurstchen,rombach2022high}, making them highly significant in terms of lowering creative barriers, enabling personalized customization, etc. Currently, most open-source text-to-image models are only permitted for academic or educational purposes, and commercial use is prohibited without explicit permission~\cite{chenpixart,luo2023latent,perniaswurstchen,rombach2022high,chen2024pixart,li2024blip,patil2024amused,ramesh2022hierarchical}. Therefore, protecting the intellectual property of these generative models is crucial for safeguarding the interests of their owners.

\begin{figure}[t]
  \centering
   \includegraphics[width=\linewidth, trim=1.35cm 0.9cm 0.9cm 1.5cm, clip]{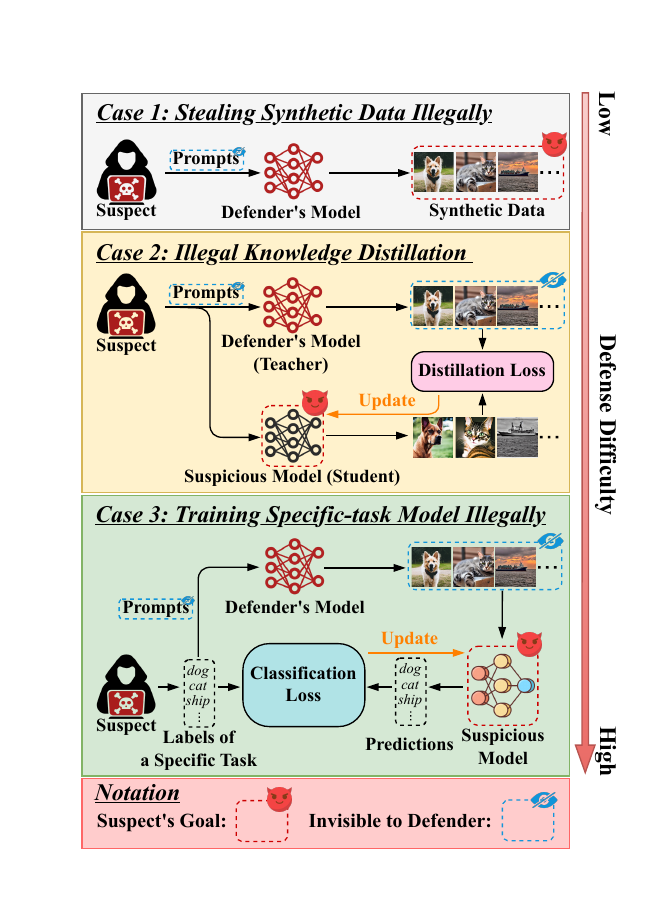}
   \vspace{-1.5em}
   \caption{The overview of the three cases. These three cases encompass nearly every conceivable real-world scenario, where a suspect uses synthetic images from the defender's text-to-image model illegally. In this paper, we are committed to addressing the security risks in \textit{Case 3}.}
   \vspace{-1.5em}
   \label{fig:cases}
\end{figure}

Currently, one of the major security challenges faced by defenders, \ie, the generative model owners, is the unauthorized use of the synthetic data from their models by suspects~\cite{wang2023detect,wang2024did,zhang2024training,liu2023watermarking}. As shown in \cref{fig:cases}, based on the suspect's intention, this issue can be divided into three cases:

\textit{Case 1}: the suspect steals the synthetic data illegally and claims it as his own personal work~\cite{wang2024did,liu2023watermarking,tancik2020stegastamp}, implying the intellectual property of the generative model is infringed. 

\textit{Case 2}: the suspect steals the synthetic data from the defender's generative model, and trains another generative model for the same task using knowledge distillation techniques~\cite{salimansprogressive,hinton2015distilling,xiang2024dkdm,dockhorn2023distilling,mekonnen2024adv}. In this case, the defender cannot access the synthetic data used by the suspect, making it a more challenging defense task compared to \textit{Case 1}. 

\textit{Case 3}: the suspect steals the synthetic data from the generative model to train another model for a specific task~\cite{yuan2022not,sariyildiz2023fake,he2022synthetic,shipard2023diversity,zhou2023training,bansal2023leaving}. In this case, the defender is not only unaware of the training data used by the suspicious model, but also cannot compare the similarity between the generative model and the suspicious model directly due to the different tasks, representing an even more arduous defense task compared to \textit{Case 2}.

For \textit{Case 1} and \textit{Case 2}, there are already some mature methods to address these issues, such as adding watermarks~\cite{zhao2023recipe,liu2023watermarking,xiong2023flexible,yuan2024watermarking,min2024watermark} and training data attribution~\cite{zhang2024training}, which identify illegal activities by checking the distributions of the suspicious images or the synthetic images from suspicious model. However, for \textit{Case 3}, although the potential of using synthetic data to train models for other tasks has been recognized gradually in recent years~\cite{yuan2022not,sariyildiz2023fake,he2022synthetic,shipard2023diversity} due to the lower collection costs compared to real data, the associated security risks have been overlooked. Methods that work for the first two cases cannot be applied to the third case, because the learning task of the suspicious model is not necessarily image generation. For example, for a suspicious image classification model, there are no synthetic data available for defenders to examine.

In this work, we address this important yet unexplored issue. To the best of our knowledge, we propose the first method for protecting the intellectual property of generative models in such scenario, which we call Training data Provenance Verification (TrainProVe). It helps defenders verify whether the suspicious model has been trained using synthetic data from their generative model. Given that suspicious models may use synthetic data for training without authorization, and is used for commercial purposes, we focus on the black-box setting, where the defenders have no information about the training configuration of the suspicious model (\eg, loss function, model architecture, etc.) and can only access the model through Machine Learning as a Service (MLaaS)~\cite{ribeiro2015mlaas,sun2015mlaas,weng2022mlaas}. This means that the defenders can only obtain prediction results (such as prediction logits or labels) via the suspicious model's API.

The rationale behind TrainProVe is grounded in the principle of generalization error bound, which suggests that, for two models with the same task, if the distance between their training data distributions is smaller, their generalization ability will be closer. Specifically, defenders can train a model for the same task as the suspicious model, using the synthetic data generated by their generative model. By comparing the performances of these two models, defenders can determine whether the suspicious model was trained on synthetic data from their generative model. More concretely, 
the proposed TrainProVe involves three steps: (1) using the defender's generative model along with different text prompts to generate a \textit{shadow dataset} and a \textit{validation dataset}, based on the suspicious model's task; (2) training a \textit{shadow model} for the same task as the suspicious model using the shadow dataset; (3) performing hypothesis testing on the prediction accuracy of the suspicious model and the shadow model on the validation dataset, in order to determine whether the suspicious model was trained using synthetic data from the defender's generative model.

In summary, the principal contributions of this paper are threefold: (1) we introduce the important yet under-explored issue of intellectual property protection for text-to-image models, in which the suspect uses the synthetic data from the defender's generative model illegally to train a specific-task suspicious model, such as an image classification model. For the defender, the suspicious model is a black-box; (2) we discover and demonstrate that the similarity between different models is related to their source domains closely, and based on this, we propose the training data provenance verification technique. To the best of our knowledge, this is the first method to address such intellectual property protection issue; (3) Extensive experiments show that TrainProVe achieves over 99\% accuracy in verifying training data provenance for suspicious models, surpassing all previous studies.
\section{Related Work}
\label{sec:related}

\paragraph{Text-to-image diffusion model.}
Diffusion models~\cite{ho2020denoising} have revolutionized generative modeling fundamentally, in terms of quality and generalization. Their excellent scalability has led to significant advancements in specific applications, such as text-to-image generation. Text-to-image diffusion models~\cite{chenpixart,perniaswurstchen,rombach2022high} use the diffusion process to transform text descriptions into realistic images. Recently, people have begun to recognize the potential of text-to-image diffusion models for synthesizing dataset~\cite{sariyildiz2023fake,shipard2023diversity,zhou2023training,jing2021turning,jing2023deep}. However, so far, no one has realized that suspects might use these text-to-image diffusion models to synthesize datasets and train their own models without authorization. Therefore, in this paper, we propose the first feasible solution to address this issue, securing and fostering healthy development in this field.

\paragraph{Text-to-image model protection.}
Training generative models requires large amounts of data and computational resources~\cite{rombach2022high,chenpixart}, making the intellectual property of these models crucial for their owners. When these models are open-source~\cite{chenpixart,luo2023latent,perniaswurstchen,rombach2022high}, their intellectual property is vulnerable to infringement by suspects. Specifically, for text-to-image models, suspects have the following three ways primarily to infringe on their intellectual property: (1) stealing synthetic data illegally; (2) illegal knowledge distillation; (3) training suspicious models for specific tasks using synthetic data illegally. 
For the first two cases, watermarking~\cite{liu2023watermarking,luo2023steal,zhao2023recipe} and training data attribution~\cite{zhang2024training} have been developed to address these issues. However, for the third case, no suitable solution exists currently.
This work aims to address this gap.

\paragraph{Generalization error bound.}
Generalization error (upper) bound is used to describe a model's performance on testing data and widely applied in domain adaptation~\cite{wang2018deep,farahani2021brief}, which is a special case of transfer learning~\cite{torrey2010transfer,zhuang2020comprehensive}.
Theorem 1 in~\cite{ben2006analysis} states the generalization error upper bound of model \( M \) on the target domain $T$. Let \( \mathcal{H} \) be a hypothesis space with VC dimension \( d \), and let \( m \) be the sample size of the dataset in the source domain $S$. Then, with probability at least \( 1 - \delta \), for every \( M \in \mathcal{H} \):
\begin{equation}
    \epsilon_T(M) \leq \hat{\epsilon}_S(M) + \sqrt{\frac{4}{m}\left(d\log\frac{2em}{d}\right)}+d_{\mathcal{H}}(P_S,P_T)+\lambda,
\label{eq}
\end{equation}
where \( \epsilon_T(M) \) and \( \hat{\epsilon_S}(M) \) represent the generalization error of $M$ on the target domain and the empirical error of $M$ on the source domain, respectively. \( P_S \) and \( P_T \) denote the marginal probability distributions of the source and target domains, respectively. \( d_\mathcal{H}(P_S,P_T) \) is the \( \mathcal{H} \)-divergence between the source and target domains, which measures the distance between the two distributions within the hypothesis space \( \mathcal{H} \). \(\lambda\) is a constant and $e$ is the base of the natural logarithm.
The rationale behind TrainProVe is grounded in this theorem, which suggests that, for two models with the same task, if the distance between their training data distributions is smaller, their generalization ability will be closer.

\section{Method}
\label{sec:method}

\subsection{Problem Formulation}

In this study, we focus on the training data provenance verification task in image classification. This task involves two roles: the \textit{defender} and the \textit{suspect}. The defender, embodying the role of text-to-image model provider, endeavors to ascertain whether the black-box suspicious model, ${M}_{sus}$, has been illicitly trained on synthetic dataset generated by his model $G_d$ (as shown in \textit{Case 3} of \cref{fig:cases}). More specifically, for a black-box suspect model, the defender's objective is to determine whether it is illegal or legal based solely on the labels it outputs.

${M}_{sus}$ can be classified as either illegal or legal based on its training datasets:
(1) \textit{Illegal}: ${M}_{sus}$ is trained on the synthetic data from the defender's model $G_d$, indicating the occurrence of intellectual property infringement;
(2) \textit{Legal}: ${M}_{sus}$ is trained using a different data source independent of the defender, such as some real data or synthetic data from other generative models, implying the innocence of the suspect. 
To be precise, when presented with a black-box suspicious model, the defender's objective is to determine whether it is an illegal or legal model.

\begin{figure*}[t]
  \centering
   \includegraphics[width=\linewidth, trim=1.5cm 0.4cm 0.8cm 0.2cm, clip]{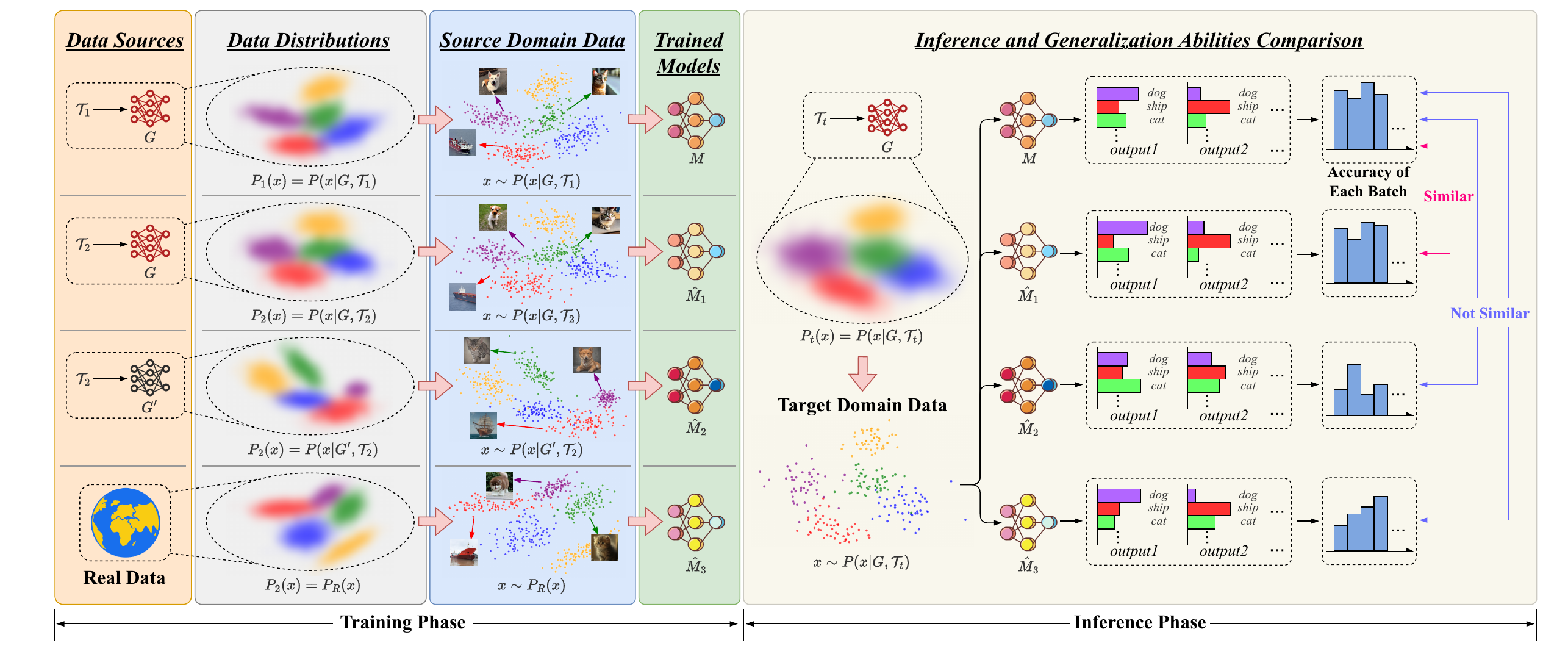}
   \vspace{-1.0em}
   \caption{The overview of TrainProVe's motivation. $M$ is trained on the dataset whose data distribution is $P(x|G,\mathcal{T}_1)$. $\hat{M}_{1}$, $\hat{M}_{2}$, and $\hat{M}_{3}$ are $\hat{M}$ trained on three datasets with different data distributions, respectively. The source and target domain data are sampled from their respective data distributions. \( G \) and \( G' \) are two different text-to-image models, and \( \mathcal{T}_1 \), \( \mathcal{T}_2 \), and \( \mathcal{T}_t \) are distinct sets of text prompts.}
   \vspace{-1em}
   \label{fig:motivation}
\end{figure*}

\subsection{The Theoretical Insight}
The underlying rationale for TrainProVe is that when two models are assigned the same task, their ability to generalize will be more aligned if the divergence between their training data distributions is diminished. We demonstrate this through a case study of $K$-way classification, where \( \mathcal{X}= \mathbb{R}^{C\times W \times H} \) and \( \mathcal{Y} = \{1,\ldots, K\} \) represent the input and label spaces respectively.
We follow the setting of previous domain adaptation works~\cite{wang2018deep,pan2009survey}, viewing a domain as composed of a feature space \( \mathcal{X} \) and a marginal probability distribution \( P(\bm{x}) \), \ie, \( \{ \mathcal{X}, P(\bm{x}) \} \), where \(\bm{x} \in \mathcal{X}\). 

Let \( S_1=\{ \mathcal{X}, P_1(\bm{x})\} \) and \( S_2=\{ \mathcal{X}, P_2(\bm{x}) \} \) be two source domains sharing the same feature space but possessing 
distinct marginal probability distributions, \ie, \( P_{1}(\bm{x}) \neq P_{2}(\bm{x}) \). \( \mathcal{D}_{1} = \{(\bm{x}^i_{1},y^i_{1}) \}_{i=1}^{|\mathcal{D}_{1}|} \) and \( \mathcal{D}_{2} = \{(\bm{x}^j_{2},y^j_{2})\}_{j=1}^{|\mathcal{D}_{2}|} \) represent the
datasets sampled from the source domains \( S_1 \) and \( S_2 \) respectively, where \(\bm{x}^i_{1} \sim P_{1}(\bm{x})\) and \(\bm{x}^j_{2} \sim P_{2}(\bm{x})\). \( M \) and \( \hat{M} \) are the models trained on \( \mathcal{D}_{1} \) and \( \mathcal{D}_{2} \) respectively. Then we have the following theorem:
\begin{theorem}
Assume \( M \) is trained on synthetic data generated by a text-to-image model \( G \) with a set of text prompts \( \mathcal{T}_1 \)
, \ie, \( P_1(\bm{x}) = P(\bm{x} | G, \mathcal{T}_1) \). \( \hat{M} \) can be trained on either real data or synthetic data generated by any text-to-image model. Based on the generalization errors of \( M \) and \( \hat{M} \) on the target domain \( T=\{ \mathcal{X}, P(\bm{x}| G, \mathcal{T}_t) \} \), we have:
\begin{equation}
    \sup_{P_2(\bm{x}) = P(\bm{x} | G, \mathcal{T}_2)} |\Delta\epsilon_T| \leq \sup_{P_2(\bm{x}) \perp G}  |\Delta\epsilon_T|,
\end{equation}
where \( \Delta\epsilon_T \) represents the difference in generalization error between \( M \) and \( \hat{M} \) on the target domain \(T\), expressed as \(\Delta\epsilon_T=\epsilon_T(M)-\epsilon_T(\hat{M})\). \( P_2(\bm{x}) \perp G \) denotes that \( P_2(\bm{x}) \) is independent of \( G \), meaning \( P_2(\bm{x}) = P(\bm{x}|{G',\mathcal{T}_2}) \) or \( P_2(\bm{x}) = P_{R}(\bm{x}) \), where \( G' \) is a text-to-image model different from \( G \) and \( P_{R}(\bm{x}) \) represents the distribution of real data. \( \mathcal{T}_1 \), \( \mathcal{T}_2 \) and \( \mathcal{T}_t \) are distinct sets of text prompts.
\label{th1}
\end{theorem}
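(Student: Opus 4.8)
The plan is to apply the generalization error bound in \cref{eq} separately to $M$ and to $\hat{M}$, and then to track how the resulting control on $|\Delta\epsilon_T|$ depends on the choice of the source distribution $P_2(\bm{x})$. Writing $\Phi(m)=\sqrt{\frac{4}{m}\left(d\log\frac{2em}{d}\right)}$ for the complexity term, \cref{eq} yields
\[
\epsilon_T(M) \leq \hat{\epsilon}_{S_1}(M) + \Phi(|\mathcal{D}_1|) + d_{\mathcal{H}}\left(P(\bm{x}|G,\mathcal{T}_1),P(\bm{x}|G,\mathcal{T}_t)\right) + \lambda,
\]
\[
\epsilon_T(\hat{M}) \leq \hat{\epsilon}_{S_2}(\hat{M}) + \Phi(|\mathcal{D}_2|) + d_{\mathcal{H}}\left(P_2(\bm{x}),P(\bm{x}|G,\mathcal{T}_t)\right) + \lambda .
\]
The first observation I would use is that $\epsilon_T(M)$ does not depend on $P_2(\bm{x})$ at all, since $M$ is fixed once trained on $\mathcal{D}_1$. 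Hence $\sup_{P_2}|\epsilon_T(M)-\epsilon_T(\hat{M})|$ is governed entirely by the attainable range of $\epsilon_T(\hat{M})$; because $t\mapsto|\epsilon_T(M)-t|$ is convex, its maximum over the feasible interval is reached at an endpoint, and since $\epsilon_T(M)$ is small (both $P_1$ and $P_T$ originate from $G$) this endpoint is the largest admissible value of $\epsilon_T(\hat{M})$. It therefore suffices to compare, across the two regimes, the supremum of the upper bound on $\epsilon_T(\hat{M})$.

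I would then isolate the single term in that bound that reacts to the origin of $P_2(\bm{x})$, namely $d_{\mathcal{H}}\left(P_2(\bm{x}),P(\bm{x}|G,\mathcal{T}_t)\right)$. The complexity term $\Phi$ and the constant $\lambda$ are taken to match across the two regimes — both models solve the same $K$-way task on datasets of comparable size — and $\hat{\epsilon}_{S_2}(\hat{M})$ is the small empirical error of a model fit to its own source. With these terms neutralized, the theorem reduces to the divergence inequality
\[
\sup_{\mathcal{T}_2} d_{\mathcal{H}}\left(P(\bm{x}|G,\mathcal{T}_2),P(\bm{x}|G,\mathcal{T}_t)\right) \;\leq\; \sup_{P_2\perp G} d_{\mathcal{H}}\left(P_2(\bm{x}),P(\bm{x}|G,\mathcal{T}_t)\right).
\]

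Establishing this divergence inequality is the step I expect to be the main obstacle. My plan is to exploit that $\mathcal{H}$-divergence satisfies a triangle inequality and that every distribution $G$ produces, regardless of the prompt set, carries the same characteristic generative signature of $G$. Concretely, I would posit that all prompt-conditioned outputs $\{P(\bm{x}|G,\mathcal{T})\}$ lie within a divergence ball of small radius $r_G$ around a common $G$-centroid, so that the left-hand supremum is at most $2r_G$ by the triangle inequality, whereas any prompt-independent source $P_2\perp G$ (a different generator $G'$ or real data $P_R$) sits outside this ball and drives the right-hand supremum above the intra-$G$ spread. The delicate point is precisely this separation hypothesis on $G$: it formalizes the empirical fact that a generator leaves a recognizable fingerprint on all of its samples, and without it the two suprema need not be ordered. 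I would state it as an explicit assumption on $G$, derive the divergence inequality from it, and then substitute back into the two error bounds — invoking the convexity reduction of the first step — to obtain $\sup_{P_2(\bm{x})=P(\bm{x}|G,\mathcal{T}_2)}|\Delta\epsilon_T|\leq\sup_{P_2(\bm{x})\perp G}|\Delta\epsilon_T|$, which is the claim.
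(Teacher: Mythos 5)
Your proposal matches the paper's own proof in all essentials: the paper likewise instantiates the generalization bound for both \( M \) and \( \hat{M} \), assumes equal sample sizes and equal empirical source errors so that the two upper bounds differ only in their \( d_{\mathcal{H}} \) terms, identifies the supremum of \( |\Delta\epsilon_T| \) with the larger of the two upper bounds, and thereby reduces the theorem to the inequality \( d_{\mathcal{H}}\bigl(P(\bm{x}|G,\mathcal{T}_2),P(\bm{x}|G,\mathcal{T}_t)\bigr) \leq d_{\mathcal{H}}\bigl(P_2(\bm{x}),P(\bm{x}|G,\mathcal{T}_t)\bigr) \) for \( P_2(\bm{x}) \perp G \). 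The only substantive difference is that the paper asserts this divergence inequality outright, claiming it follows simply because the first pair of distributions shares the generator \( G \) while the second pair does not, whereas you correctly single it out as the main obstacle and state it as an explicit separation assumption on \( G \) — the same logical content, rendered more candidly.
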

Detailed proofs can be found in the supplementary material. \cref{th1} suggests that \textit{for two models with distinct (identical) training data sources, their generalization errors differ substantially (minimally)}. Its contrapositive states that \textit{for two models with small (large) discrepancies in generalization error, their training data sources are identical (distinct)}, which supports our method.
Specifically, when \( M \) and \( \hat{M} \) are trained on synthetic data from the same text-to-image model, their accuracies on the corresponding synthetic data from this text-to-image model are more akin, compared to when their training data is derived from disparate sources.
This aligns with the rationale of TrainProVe, as visually depicted in \cref{fig:motivation}. The discrepancies in accuracies among these models originate from the disparities in their source domains. The closer the data distributions of the source domain data, the more analogous the accuracies of the trained models on the target domain data.

\subsection{The Proposed TrainProVe}
\label{sec:pipeline}
Building upon the above analysis, we propose a method called TrainProVe to protect generative models’ intellectual property, as shown in \cref{fig:flowchart}, which consists of three stages:

(1) \textbf{Shadow and Validation Dataset Generation}: 
Utilizing $G_d$ to produce a shadow dataset $\mathcal{D}_{sdw}$ and a validation dataset $\mathcal{D}_v$, based on various text prompt sets $\mathcal{T}_{sdw}$ and $\mathcal{T}_v$, tailored to the predicted labels set $\mathcal{C}$ of the suspect model $M_{sus}$. Here $G_d$, $\mathcal{T}_{sdw}$, and $\mathcal{T}_v$ correspond to $G$, $\mathcal{T}_1$, and $\mathcal{T}_t$ in \cref{fig:motivation} respectively, while $\mathcal{D}_{sdw}$ and $\mathcal{D}_v$ correspond to the training data of $M$ and the target domain data.

(2) \textbf{ Shadow Model Training}: 
Using the shadow dataset $\mathcal{D}_{sdw}$ to train the shadow model $M_{sdw}$. The shadow model is analogous to $M$ in \cref{fig:motivation}, whereas the suspect model is denoted as $\hat{M}$. The purpose of the shadow model is to compare its performance with the suspect model trained on distinct data, focusing on their generalization capacities (accuracies) on the validation dataset. Specifically, when both models are trained on data from $G_d$, their accuracies on the validation dataset tend to be similar.

(3) \textbf{Hypothesis Testing}: 
Providing the validation dataset $\mathcal{D}_v$ to both the shadow model $M_{sdw}$ and the suspect model $M_{sus}$ in the same sequence batch by batch. Subsequently, calculating the accuracy of the two models batch by batch, leading to accuracy sets $\mathcal{A}_{sdw}$ and $\mathcal{A}_{sus}$. Ultimately, utilizing a one-sided Grubbs’ hypothesis test~\cite{grubbs1949sample} to ascertain whether $M_{sus}$ violates the intellectual property of $G_d$.


In the one-sided Grubbs' hypothesis test, the null hypothesis $H_0$ asserts that the mean of the set \( \mathcal{A}_{sus} \), denoted as $mean(\mathcal{A}_{sus})$, is not an unusually low value among the elements of $\mathcal{A}_{sdw}$. On the contrary, the alternative hypothesis $H_1$ posits that the mean of \( \mathcal{A}_{sus} \) is an exceptionally low value within $\mathcal{A}_{sdw}$. Rejecting the null hypothesis $H_0$ implies that the predictive capabilities of $M_{sus}$ are significantly inferior to those of $M_{sdw}$. This suggests that the model \( M_{sus} \) exhibits notably inferior generalization performance on the validation dataset compared to \( M_{sdw} \), indicating that $M_{sus}$ has not been exposed to the synthetic data from $G_d$ during training and is therefore compliant. Conversely, if the predictions generated by $M_{sus}$ and $M_{sdw}$ are similar, it indicates that the generalization performance of \( M_{sus} \) on the validation dataset closely resembles that of \( M_{sdw} \). This similarity implies that $M_{sus}$, much like $M_{sdw}$, has been trained on the synthetic data from $G_d$, thereby suggesting that $M_{sus}$ violates the intellectual property rights of $G_d$ and is considered illicit.

\begin{figure}[t]
  \centering
   \includegraphics[width=\linewidth, trim=1.15cm 0.7cm 1.95cm 0.7cm, clip]{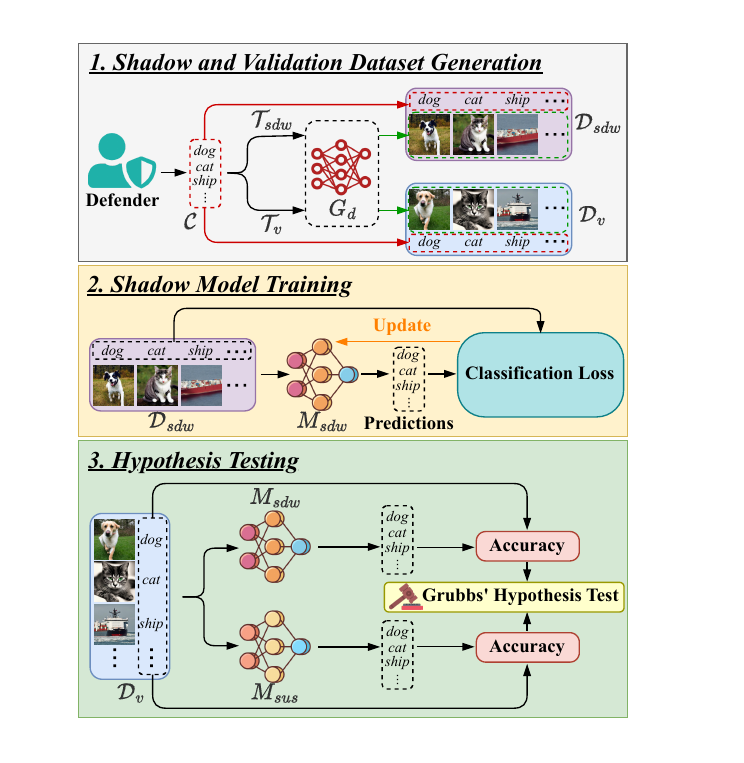}
   \vspace{-1.5em}
   \caption{The complete process of TrainProVe.}
   \vspace{-1.5em}
   \label{fig:flowchart}
\end{figure}

\section{Experiments}

Here, we present the results of DOV4MM on ImageNet~\cite{deng2009imagenet} and CIFAR~\cite{krizhevsky2009learning}, ablation experiments, interference resistance analysis, and efficiency analysis. More details and results (hypothesis testing, results on CLIP~\cite{radford2021learning}, etc) can be found in the supplementary materials.

\label{sec:exp}
\subsection{Experimental Setup}
\label{sec:para_setting}


\textbf{Models and Datasets}. We evaluate the proposed TrainProVe with three visual datasets (CIFAR10~\cite{krizhevsky2009learning}, CIFAR100~\cite{krizhevsky2009learning}, and ImageNet-100~\cite{sariyildiz2023fake}) and four text-to-image models (Stable Diffusion~\cite{rombach2022high} v1.4, latent consistency model~\cite{luo2023latent}, PixArt-$\alpha$~\cite{chenpixart}, and Stable Cascade~\cite{perniaswurstchen}). ImageNet-100 is a randomly selected subset of ImageNet-1K~\cite{deng2009imagenet}, encompassing 100 categories and 126,689 images. Stable Cascade is a latent diffusion model constructed on the Würstchen architecture~\cite{perniaswurstchen}. In the experiments, unless explicitly stated otherwise, all text-to-image models employed are pre-trained models. To distinguish from real datasets, we denote the synthetic datasets of CIFAR10, CIFAR100, and ImageNet-100 as CIFAR10-Syn, CIFAR100-Syn, and ImageNet-100-Syn, respectively. 


\noindent\textbf{Experiments on CIFAR}: The shadow dataset \( \mathcal{D}_{sdw} \) contains 30,000 images with a resolution of 32×32, and the text prompt is \( \mathcal{T}_{sdw}=\{``a \ \{\textit{c}\}"|\textit{c} \in \mathcal{C}\} \). The validation dataset \( \mathcal{D}_{v} \) includes 5,000 images with a resolution of 32×32, with a text prompt of \( \mathcal{T}_v=\{``a \ photo \ of \ a \ \{\textit{c}\}"|\textit{c} \in \mathcal{C}\} \). For the shadow model \( M_{sdw} \), the architecture is ResNet18~\cite{he2016deep}, and following previous work~\cite{shipard2023diversity}, we set the batch size for training \( M_{sdw} \) to 512, the learning rate to 1e-4, weight decay to 0.9, and the loss function to cross-entropy loss. 
To simulate the training data for the suspicious model, we generated 60,000 images with a resolution of 32×32 using the method in~\cite{shipard2023diversity} as CIFAR10-Syn/CIFAR100-Syn. Moreover, the suspicious models have various architectures and training hyperparameters to simulate different scenarios, specifically including: four architectures (ResNet50~\cite{he2016deep}, ResNet101~\cite{he2016deep}, ViT-S/16~\cite{dosovitskiy2020image}, ViT-B/16~\cite{dosovitskiy2020image}), two batch sizes (512 and 256), two learning rates (1e-4 and 5e-5), two weight decay values (0.9 and 0.09), and two loss functions (cross-entropy loss and focal loss~\cite{lin2017focal}). For each dataset, we can use these parameters to train 64 (\( C_4^1 C_2^1 C_2^1 C_2^1 C_2^1 \)) suspicious models. Batch size for inference in TrainProVe is 128, and training epoch for all models is set to 100. 
    
\noindent\textbf{Experiments on ImageNet-100}: For the shadow dataset \( \mathcal{D}_{sdw} \), validation dataset \( \mathcal{D}_{v} \), and shadow model \( M_{sdw} \), all settings are the same as those used for CIFAR, except that \( \mathcal{D}_{sdw} \) contains 50,000 images with a resolution of 256×256 and the batch size for model training is 64.
The training set for the suspicious model \( M_{sus} \) includes 100,000 images generated using the method in~\cite{sariyildiz2023fake}, with a resolution of 256×256. For the parameter selection of the suspicious model, all settings are the same as those used in the experiments on CIFAR, except that the batch size is 64 or 32. Therefore, we can also obtain 64 suspicious models for each dataset (ImageNet-100 or ImageNet-100-Syn).
Additionally, batch size for inference in TrainProVe is 16. The training epoch for all models is set to 100. 

\noindent\textbf{Evaluation Metrics.}
Our goal is to classify the suspicious model \( M_{sus} \) as either illegal or legal. Specifically, if the training data for \( M_{sus} \) originates from the defender's generative model \( G_d \), then \( M_{sus} \) should be classified as illegal; otherwise, \( M_{sus} \) is considered legal. Given that this is a classification task, we selected three commonly used metrics for evaluation: accuracy, F1 score, and AUROC.

\begin{table*}
\small
\centering
\addtolength{\tabcolsep}{-1.5pt}
\begin{tabular}{ccccccccc}

\Xhline{1.0pt}
 \multicolumn{2}{c}{\textbf{The Suspect's Data Sources}} &\multirow{2}{*}{\textbf{TP}} &\multirow{2}{*}{\textbf{FP}} &\multirow{2}{*}{\textbf{FN}} &\multirow{2}{*}{\textbf{TN}} &\multirow{2}{*}{\textbf{Accuracy}} &\multirow{2}{*}{\textbf{F1 Score}} &\multirow{2}{*}{\textbf{AUROC}} \\ \cline{1-2}
 \textbf{The Defender's Generative Model $G_d$} &\textbf{Data Sources Unrelated to \( G_d \)} & & & & & &  \\ \hline
 \multirow{4}{*}{Stable Diffusion~\cite{rombach2022high} v1.4} &Latent Consistency Model &\multirow{4}{*}{64} &\multirow{4}{*}{0} &0 &64 &\multirow{4}{*}{0.988} &\multirow{4}{*}{0.970} &\multirow{4}{*}{0.992} \\
 &PixArt-$\alpha$ & & &0 &64 & & & \\
 &Stable Cascade & & &0 &64 & & & \\
 &Real Data & & &4 &60 & & & \\ \cline{1-9}
 \multirow{4}{*}{Latent Consistency Model~\cite{luo2023latent}} &Stable Diffusion v1.4 &\multirow{4}{*}{64} &\multirow{4}{*}{0} &0 &64 &\multirow{4}{*}{1.000} &\multirow{4}{*}{1.000} &\multirow{4}{*}{1.000} \\
 &PixArt-$\alpha$ & & &0 &64 & & & \\
 &Stable Cascade & & &0 &64 & & & \\
 &Real Data & & &0 &64 & & & \\ \cline{1-9}
 \multirow{4}{*}{PixArt-$\alpha$~\cite{chenpixart}} &Stable Diffusion v1.4 &\multirow{4}{*}{64} &\multirow{4}{*}{0} &0 &64 &\multirow{4}{*}{1.000} &\multirow{4}{*}{1.000} &\multirow{4}{*}{1.000} \\
 &Latent Consistency Model & & &0 &64 & & & \\
 &Stable Cascade & & &0 &64 & & & \\
 &Real Data & & &0 &64 & & & \\ \cline{1-9}
 \multirow{4}{*}{Stable Cascade~\cite{perniaswurstchen}} &Stable Diffusion v1.4 &\multirow{4}{*}{64} &\multirow{4}{*}{0} &0 &64 &\multirow{4}{*}{1.000} &\multirow{4}{*}{1.000} &\multirow{4}{*}{1.000} \\
 &Latent Consistency Model & & &0 &64 & & & \\
 &PixArt-$\alpha$ & & &0 &64 & & & \\
 &Real Data & & &0 &64 & & & \\ \cline{1-9}
 \textbf{Average Value} & & & & & &0.997 &0.992 &0.998 \\
  \Xhline{1.0pt}
\end{tabular}
\vspace{-0.5em}
\caption{The results of TrainProVe on CIFAR10. The ``64'' in the table indicates the 64 suspicious models trained based on a data source, with different training hyperparameters. Suspicious models predicted as illegal/legal are considered positive/negative instances. If the suspicious model’s training data source is \( G_d \), it is illegal, otherwise, it is innocent.}
\vspace{-1.5em}
\label{tab:acc_cifar10}
\end{table*}

\subsection{Baselines}

Since TrainProVe is the first attempt at addressing the introduced training data provenance verification problem, there are no existing methods for direct comparison. Nonetheless, there are several baselines that can be utilized to showcase the efficacy of the proposed method, as enumerated below:

\noindent\textbf{Random classification}: a rather trivial baseline that randomly classify suspicious models as illegal or legal.
    
\noindent\textbf{Han \etal's work~\cite{han2024detection}}: This is the closest method to ours, which trains an attributor to classify suspicious models. The underlying assumption of this method is that the defender possesses multiple different generative models to train the attributor, while in our scenario, the defender has only a single generative model $G_d$. To evaluate Han \etal's method, we relaxed the constraints on the defender by allowing the defender to access all four text-to-image models.

\noindent\textbf{TrainProVe-Ent}: 
This method performs hypothesis testing by analyzing the entropy of the predicted logits generated by models $M_{sdw}$ and $M_{sus}$. Specifically, we analyze $\mathcal{D}_v$ in batches using models $M_{sdw}$ and $M_{sus}$ to derive their respective logits, and subsequently compute their entropies $\mathcal{E}_{sdw}$ and $\mathcal{E}_{sus}$. Following this, we employ a one-sided Grubbs' hypothesis test to ascertain whether the average of all elements in $\mathcal{E}_{sus}$ is an outlier with a high value compared to $\mathcal{E}_{sdw}$. This analysis aids in determining the validity of model $M_{sus}$, with its parameter configurations being consistent with those of TrainProVe.

\noindent\textbf{TrainProVe-Sim}: 
This method conducts hypothesis testing based on the pairwise similarity of logits within the same category from \( M_{sdw} \) and \( M_{sus} \). Specifically, \( \mathcal{D}_v \) is processed in batches using both \( M_{sdw} \) and \( M_{sus} \) to obtain their respective logits. Subsequently, the pairwise cosine similarity within the same category is computed to obtain the similarity sets \( \mathcal{S}_{sdw} \) and \( \mathcal{S}_{sus} \). Following this, a one-sided Grubbs' hypothesis test is conducted to determine whether the average of all elements in \( \mathcal{S}_{sus} \), is an outlier of low value in \( \mathcal{S}_{sdw} \), thereby evaluating the validity of \( M_{sus} \). The parameter configurations align with those in TrainProVe.

\subsection{Experimental Results}
In this paper, \( G_d \) is one of the four text-to-image models (Stable Diffusion v1.4, latent consistency model, PixArt-$\alpha$, and Stable Cascade), and the training dataset of \( M_{sus} \) comes from one of five data sources (Stable Diffusion v1.4, latent consistency model, PixArt-$\alpha$, Stable Cascade and real data). We first calculate the accuracy, F1 score, and AUROC under the four different \( G_d \), and then average the metrics across the four scenarios as the final metrics for comparing different methods. Taking the results of TrainProVe on CIFAR10 as an example (more experimental results can be found in the supplementary material), as shown in \cref{tab:acc_cifar10}. It is evident that TrainProVe can accurately classify suspicious models trained on various data sources.

The results of various methods on different datasets are  illustrated in \cref{tab:res}, showcasing the superiority of TrainProVe. In comparison to the research by Han \etal, TrainProVe has three advantages: (1) Han \etal's approach necessitates the defender to have access to multiple different generative models, while TrainProVe is effective even when the defender only has his own generative model \( G_d \); (2) The efficacy of Han \etal's method consistently lags behind that of TrainProVe across all cases. This is because the former relies on an attributor for classification, which tends to overfit to \(M_{sdw}\)'s architecture used by the defender. If the architecture of \(M_{sus}\) differs, this method is likely to fail. In contrast, TrainProVe relies on hypothesis testing to classify \(M_{sus}\), thus avoiding this risk; (3) Han \etal's method requires training a large number of models based on synthetic datasets from each generative model to train the attributor. In contrast, TrainProVe only requires training a single ResNet18 on one synthetic dataset generated by \( G_d \).

\begin{table}
\small
\centering
\addtolength{\tabcolsep}{-4pt}
\begin{tabular}{ccccc}

\toprule
 \textbf{Dataset} &\textbf{Method} &\textbf{Avg. Acc} &\textbf{Avg. F1} &\textbf{Avg. AUC} \\ \hline

\multirow{6}{*}{CIFAR10} &Random &0.500 &0.286 &0.500 \\
 &Han \etal's Work &0.768 &0.579 &0.731 \\
 &TrainProVe-Sim &0.812 &0.647 &0.821 \\
 &TrainProVe-Ent &0.868 &0.774 &0.902 \\
 &TrainProVe &\textbf{0.997} &\textbf{0.992} &\textbf{0.998} \\  \cline{1-5}
\multirow{6}{*}{CIFAR100} &Random &0.500 &0.286 &0.500 \\
 &Han \etal's Work &0.730 &0.548 &0.703 \\
 &TrainProVe-Sim &0.575 &0.318 &0.545 \\
 &TrainProVe-Ent &0.398 &0.408 &0.624 \\
 &TrainProVe &\textbf{0.992} &\textbf{0.979} &\textbf{0.981} \\  \cline{1-5}
 \multirow{6}{*}{ImageNet-100} &Random &0.500 &0.286 &0.500 \\
 &Han \etal's Work &0.671 &0.425 &0.645 \\
 &TrainProVe-Sim &0.555 &0.305 &0.528 \\
 &TrainProVe-Ent &0.356 &0.364 &0.567 \\
 &TrainProVe &\textbf{0.786} &\textbf{0.495} &\textbf{0.769} \\
  \bottomrule
\end{tabular}
\vspace{-1em}
\caption{The results of TrainProVe and baselines on different datasets. ``Avg. Acc'', ``Avg. F1'', and ``Avg. AUC'' are the average values of accuracy, F1 score, and AUROC under four different \( G_d \), similar to \cref{tab:acc_cifar10}. \textbf{Bold} represents the best results.}
\vspace{-0.5em}
\label{tab:res}
\end{table}

Moreover, compared to TrainProVe-Ent and TrainProVe-Sim, the efficacy of TrainProVe consistently surpasses that of the others. This is because it compares the predictions of $M_{sus}$ and $M_{sdw}$ with the true labels to obtain accuracy, using this as the basis for hypothesis testing. In contrast, TrainProVe-Ent and TrainProVe-Sim don't compare model predictions with true labels, resulting in a lack of information from the true labels, which leads to inferior validation performance compared to TrainProVe.

\subsection{Ablation Studies}

\paragraph{The model architecture of \(M_{sdw}\).} We substituted the standard architecture of \( M_{sdw} \), ResNet18, with ConvNeXt-B~\cite{liu2022convnet} and Swin-B~\cite{liu2021swin}, while leaving all other configurations unaltered. The outcomes are displayed in \cref{tab:Msdw_arch}, demonstrating that TrainProVe remains efficient irrespective of whether the architecture of the shadow model is based on CNNs or transformers.

\begin{table}
\small
\centering
\addtolength{\tabcolsep}{-4.5pt}
\begin{tabular}{ccccc}

\Xhline{1.0pt}
 \textbf{Dataset} &$M_{sdw}$ &\textbf{Avg. Acc} &\textbf{Avg. F1} &\textbf{Avg. AUC} \\ \hline

 \multirow{3}{*}{CIFAR10} &ResNet18~\cite{he2016deep} &0.997 &0.992 &0.998 \\
 &ConvNeXt-B~\cite{liu2022convnet} &0.995 &0.989 &0.997 \\
 &Swin-B~\cite{liu2021swin} &0.997 &0.992 &0.998 \\ \cline{1-5}
 \multirow{3}{*}{ImageNet-100} &ResNet18 &0.786 &0.495 &0.769 \\
 &ConvNeXt-B &0.771 &0.470 &0.756 \\
 &Swin-B &0.787 &0.487 &0.766 \\
  \Xhline{1.0pt}
\end{tabular}
\vspace{-1em}
\caption{Changing \(M_{sdw}\)'s architecture. ``Avg. Acc'', ``Avg. F1'', and ``Avg. AUC'' are the average values of accuracy, F1 score, and AUROC under four different \( G_d \), similar to \cref{tab:acc_cifar10}.}
\vspace{-2em}
\label{tab:Msdw_arch}
\end{table}

\paragraph{The training epochs of \(M_{sdw}\).} We explored TrainProVe's performance as the \( M_{sdw} \)'s training epochs (default is 100) increased from 0 to 200 in increments of 20. The findings on CIFAR10, depicted in \cref{fig:training_epochs_Msdw}, suggest that TrainProVe exhibits a commendable resilience to variations in the number of training epochs for \( M_{sdw} \). It is worth noting that an excessive or insufficient number of training epochs can lead to overfitting or underfitting of \( M_{sdw} \) on \( \mathcal{D}_{sdw} \), consequently resulting in a decline in TrainProVe's performance.
\begin{figure}[t]
  \centering
   \includegraphics[width=\linewidth, trim=15cm 5.3cm 9.5cm 2cm, clip]{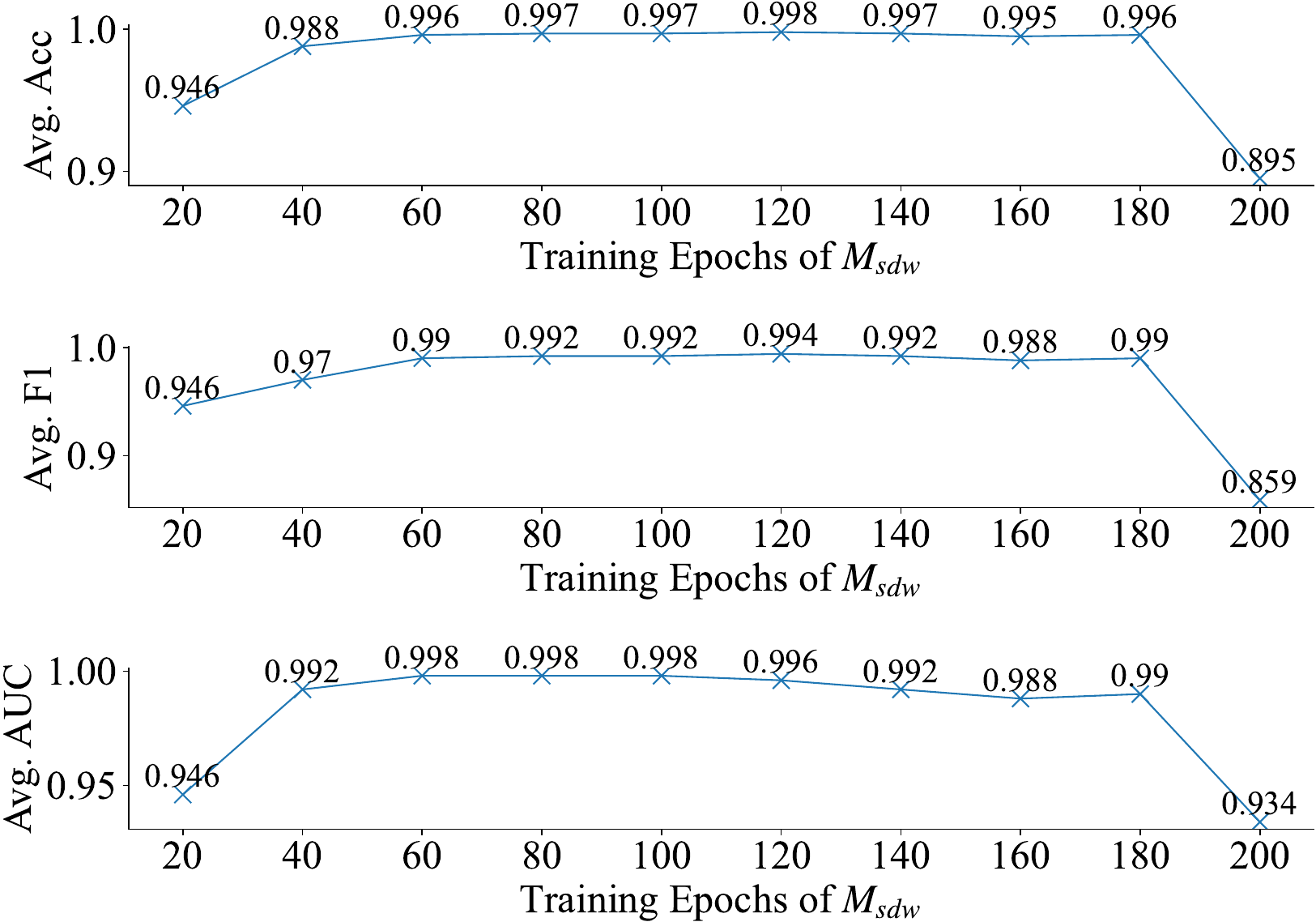}
   \vspace{-2em}
   \caption{Changing \( M_{sdw} \)'s training epochs. ``Avg. Acc'', ``Avg. F1'', and ``Avg. AUC'' are the average values of accuracy, F1 score, and AUROC under four different \( G_d \).}
   \vspace{-0.5em}
   \label{fig:training_epochs_Msdw}
\end{figure}

\paragraph{The text prompts of \(\mathcal{D}_{sdw}\) and \(\mathcal{D}_{v}\).} We swapped the text prompts, setting \( \mathcal{T}'_{sdw} = \mathcal{T}_{v} \) and \( \mathcal{T}'_{v} = \mathcal{T}_{sdw} \), and used \( \mathcal{T}'_{sdw} \) and \( \mathcal{T}'_{v} \) to generate \(\mathcal{D}_{sdw}\) and \(\mathcal{D}_{v}\), respectively. The results on CIFAR10, shown in \cref{tab:ablation_text}, indicate that TrainProVe is robust to the text prompts of \(\mathcal{D}_{sdw}\) and \(\mathcal{D}_{v}\).

\begin{table}
\centering
\small
\begin{tabular}{cccc}
\Xhline{1.0pt}
 \textbf{Text Prompts} &\textbf{Accuracy} &\textbf{F1 Score} &\textbf{AUROC} \\ \hline

 Original Prompts &0.988 &0.970 &0.992 \\
 Swapped Prompts &0.988 &0.970 &0.992 \\
  \Xhline{1.0pt}
\end{tabular}
\caption{Swapping $\mathcal{T}_{sdw}$ and $\mathcal{T}_{v}$. Note that Stable Diffusion v1.4 is \( G_d \). Latent consistency model, PixArt-$\alpha$, Stable Cascade, and real data represent other data sources.}
\label{tab:ablation_text}
\vspace{-2em}
\end{table}

\paragraph{The sample sizes of \(\mathcal{D}_{sdw}\) and \(\mathcal{D}_{v}\).} We generated \( \mathcal{D}_{sdw} \) and \( \mathcal{D}_{v} \) with different sample sizes for TrainProVe, keeping other settings unchanged. Note that the default sample sizes for \( \mathcal{D}_{sdw} \) and \( \mathcal{D}_{v} \) are 30,000 and 5,000 respectively, when generating CIFAR10. The results on CIFAR10 shown in \cref{fig:size_Dsdw_Dv} demonstrate that TrainProVe has good robustness to the sample sizes. Moreover, when \(\mathcal{D}_{sdw}\)'s sample size is too small, the generalization performance of \( M_{sdw} \) on \( \mathcal{D}_v \) may degrade, leading to a decline in TrainProVe's performance.
\begin{figure}[t]
  \centering
   \includegraphics[width=\linewidth, trim=27.4cm 0cm 4.5cm 2.85cm, clip]{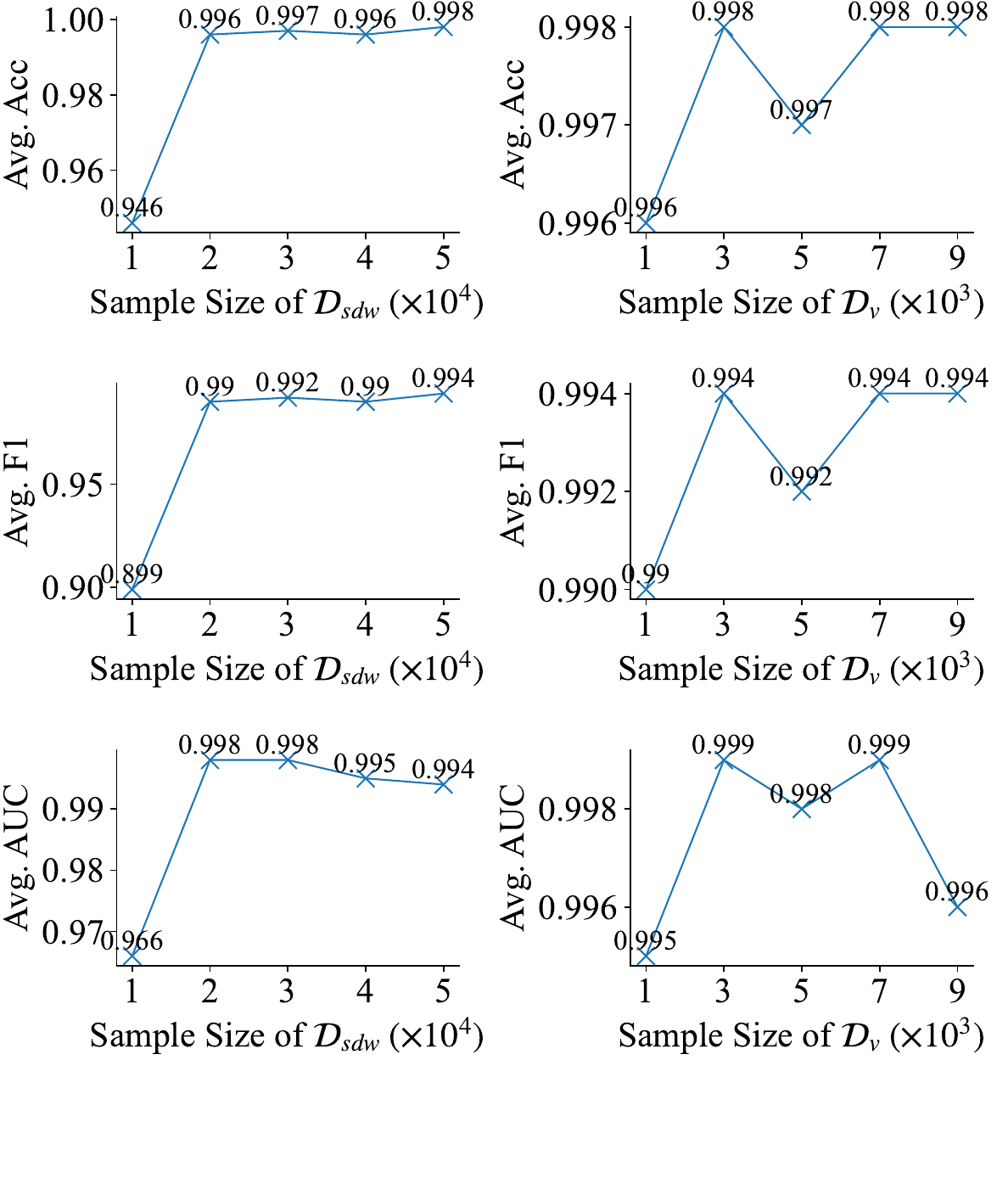}
   \vspace{-5em}
   \caption{Changing the sample sizes of \(\mathcal{D}_{sdw}\) and \(\mathcal{D}_{v}\). ``Avg. Acc'', ``Avg. F1'', and ``Avg. AUC'' are the average values of accuracy, F1 score, and AUROC under four different \( G_d \).}
   \vspace{-1.5em}
   \label{fig:size_Dsdw_Dv}
\end{figure}

\subsection{The Interference Resistance of TrainProVe}
In this section, we study that whether TrainProVe remains effective when facing more covert data theft. More experimental details can be found in the supplementary material.

\paragraph{Multiple training data sources for \(M_{sus}\).} We investigated the robustness of TrainProVe when the training dataset of \( M_{sus} \) has multiple data sources. Specifically, we assume that the training dataset of \( M_{sus} \) contains both synthetic data from \( G_d \) and real data. The results on CIFAR10 are shown in \cref{tab:multi_source}, where the training dataset for \( M_{sus} \) consists of 120,000 images, with synthetic data and real data each accounting for half. We calculated the accuracy of TrainProVe in classifying \( M_{sus} \) trained on this mixed dataset. The results indicate that TrainProVe can still accurately detect illegal behavior in this more difficult situation.

\paragraph{Fine-tuning \(G_d\).} When the suspect has access to a small amount of real data, he may fine-tune the defender’s open-source model \( G_d \) and then generate data to train \( M_{sus} \), which can enhance \( M_{sus} \)'s generalization ability on real data. Note that the defender only possesses the original \( G_d \) and has no knowledge of fine-tuning. Specifically, we fine-tuned the pre-trained Stable Diffusion v1.4 using 10,000 ImageNet-100 images (100 per class) and LoRA~\cite{hu2022lora} with 10 epochs, and then generated ImageNet-100-Syn to train \( M_{sus} \). The results are shown in \cref{tab:finetune_Gd}, which indicate that TrainProVe remains effective in this more arduous scenario.

\paragraph{Fine-tuning \(M_{sus}\).} Similarly, when the suspect has a small amount of real data, he may also fine-tune \( M_{sus} \) with real data, which was trained on synthetic data from \( G_d \), to enhance \( M_{sus} \)’s generalization ability on real data. Specifically, we fine-tuned \( M_{sus} \) using 10,000 ImageNet-100 images (100 per class), where \( M_{sus} \) was pre-trained on synthetic data from Stable Diffusion v1.4, with 10 fine-tuning epochs. The results, shown in \cref{tab:finetune_Msus}, indicate even in this more challenging scenario, TrainProVe remains effective.

\subsection{Efficiency Analysis}
We analyzed the efficiency of TrainProVe and the work of Han \etal on ImageNet-100. Specifically, following the setup in \cref{sec:para_setting}, given $M_{sus}$ trained on ImageNet-100 or ImageNet-100-Syn, we performed detection using both methods on an NVIDIA GeForce RTX 4090. $G_d$ is Stable Cascade. The runtime and results is shown in \cref{tab:efficiency}. For TrainProVe, \( \tau_g \) is the time for generating $\mathcal{D}_{sdw}$ and $\mathcal{D}_{v}$, \( \tau_t \) is the time for training $M_{sdw}$, and \( \tau_v \) is the time for classifying $M_{sus}$ using hypothesis testing. For Han \etal's method, \( \tau_g \) is the time for generating the training and validation dataset, \( \tau_t \) is the time for training the models and the attributor, and \( \tau_v \) is the time for classifying $M_{sus}$ using the attributor. From the results, it shows that the high efficiency of TrainProVe lies in its significantly smaller \( \tau_g \) and \( \tau_t \) compared to Han \etal's method. This is because TrainProVe only requires generating one dataset (\( \mathcal{D}_{sdw} \)) and training one model (\( M_{sdw} \)). In contrast, Han \etal's method requires generating \( m \) datasets and training \( m \times n \) models, where \( m \) is the number of text-to-image models accessible to the defender and \( n \) is the number of models trained by the defender on each dataset. These models are then used to train the attributor. Moreover, TrainProVe achieved better classification results, further illustrating its superiority.

\begin{table}
\small
\addtolength{\tabcolsep}{-4.7pt}
\begin{tabular}{ccccc}

\Xhline{1.0pt}
 \textbf{\( M_{sus} \)'s Training Source}  &$G_d$ &\textbf{TP} &\textbf{FP} &\textbf{Acc} \\ \hline

 \multirow{4}{*}{\makecell{CIFAR10-Syn\\+\\CIFAR10}}
 &Stable Diffusion v1.4 &64 &0 &1 \\
 &Latent Consistency Model &64 &0 &1 \\
 &PixArt-$\alpha$ &64 &0 &1 \\
 &Stable Cascade &64 &0 &1 \\
  \Xhline{1.0pt}
\end{tabular}
\vspace{-1em}
\caption{Results of TrainProVe when the training dataset of \( M_{sus} \) has multiple data sources. The ``64'' in the table indicates the 64 suspicious models with different training hyperparameters.}
\label{tab:multi_source}
\vspace{-0.5em}
\end{table}

\begin{table}
\centering
\small
\addtolength{\tabcolsep}{-3.5pt}
\begin{minipage}{0.23\textwidth}
\centering
\begin{tabular}{cccc}
\Xhline{1.0pt}
&\textbf{TP} &\textbf{FP} &\textbf{Acc} \\ \hline
 w/o fine-tuning &64 &0 &1.00 \\
 w/ fine-tuning &62 &2 &0.97 \\
  \Xhline{1.0pt}
\end{tabular}
\subcaption{Fine-tuning $G_d$}
\label{tab:finetune_Gd}
\end{minipage}
\hfill \hspace{2.5pt}
\begin{minipage}{0.23\textwidth}
\centering
\begin{tabular}{cccc}
\Xhline{1.0pt}
 &\textbf{TP} &\textbf{FP} &\textbf{Acc} \\ \hline
 w/o fine-tuning &64 &0 &1.00 \\
 w/ fine-tuning &57 &7 &0.89 \\
  \Xhline{1.0pt}
\end{tabular}
\subcaption{Fine-tuning $M_{sus}$}
\label{tab:finetune_Msus}
\end{minipage}
\vspace{-1em}
\caption{Fine-tuning \(G_d\) or \(M_{sus}\). The ``64'' in the table indicates the 64 suspicious models with different training hyperparameters.}
\vspace{-0.5em}
\end{table}

\begin{table}
\centering
\small
\addtolength{\tabcolsep}{-3.4pt}
\begin{tabular}{ccccc|ccc}
\Xhline{1.0pt}
 \textbf{Method} &$\tau_g$ &$\tau_t$ &$\tau_v$ &\textbf{Total} &\textbf{Acc} &\textbf{F1} &\textbf{AUC} \\ \hline

 Han \etal's Work &137.7h &8.0h &1.2h &146.9h &0.48 &0.28 &0.48 \\
 TrainProVe &33.6h &0.1h &1.4h &35.1h &0.78 &0.63 &0.84 \\
  \Xhline{1.0pt}
\end{tabular}
\vspace{-1em}
\caption{Efficiency analysis. ``Acc'', ``F1'' and ``AUC'' refer to the accuracy, F1 score and AUROC respectively.}
\vspace{-1.0em}
\label{tab:efficiency}
\end{table}

\section{Conclusion}
\label{sec:conclusion}

\begin{spacing}{1.00}
Open-source text-to-image models have reduced the cost of obtaining images. We introduce an important yet largely unexplored security issue: whether a suspicious model's training data originates from a specific text-to-image model? To our knowledge, we are the first to propose a solution to this problem, called TrainProVe. The success of TrainProVe in verifying training data provenance has been demonstrated through theory and experiments. Promising future work includes: (1) extending TrainProVe to other image generative models, such as image-to-image models; (2) adapting TrainProVe to protect other types of data (\eg, text); (3) exploring other privacy risks about text-to-image models.
\end{spacing}
\section{Acknowledgment}
This work was partially supported by the Pioneer R\&D Program of Zhejiang (No.2024C01021), and Zhejiang Province High-Level Talents Special Support Program ``Leading Talent of Technological Innovation of Ten-Thousands Talents Program" (No. 2022R52046).


{
    \small
    \bibliographystyle{ieeenat_fullname}
    \bibliography{main}
}

\clearpage
\setcounter{page}{1}
\maketitlesupplementary
\appendix
\setcounter{table}{0}
\renewcommand{\thetable}{A\arabic{table}}

\section{Proofs}
\label{sec:app_proofs}

\setcounter{theorem}{0}
\begin{theorem}
Assume \( M \) is trained on synthetic data generated by a text-to-image model \( G \), and a set of text prompts \( \mathcal{T}_1 \), i.e., \( P_1(\bm{x}) = P(\bm{x} | G, \mathcal{T}_1) \). \( \hat{M} \) can be trained on either real data or synthetic data generated by any text-to-image model. Based on the generalization errors of \( M \) and \( \hat{M} \) on the target domain \( T=\{ \mathcal{X}, P(\bm{x}| G, \mathcal{T}_t) \} \), we have:
\begin{equation}
    \sup_{P_2(\bm{x}) = P(\bm{x} | G, \mathcal{T}_2)} |\Delta\epsilon_T| \leq \sup_{P_2(\bm{x}) \perp G}  |\Delta\epsilon_T|,
\end{equation}
where \( \Delta\epsilon_T \) represents the difference in generalization error between \( M \) and \( \hat{M} \) on the target domain \(T\), expressed as \(\Delta\epsilon_T=\epsilon_T(M)-\epsilon_T(\hat{M})\). \( P_2(\bm{x}) \perp G \) denotes that \( P_2(\bm{x}) \) is independent of \( G \), meaning \( P_2(\bm{x}) = P(\bm{x}|{G',\mathcal{T}_2}) \) or \( P_2(\bm{x}) = P_{R}(\bm{x}) \), where \( G' \) is a text-to-image model different from \( G \) and \( P_{R}(\bm{x}) \) represents the distribution of real data. \( \mathcal{T}_1 \), \( \mathcal{T}_2 \) and \( \mathcal{T}_t \) are distinct sets of text prompts.
\end{theorem}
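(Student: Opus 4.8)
The plan is to derive the result directly from the generalization error bound in \cref{eq}, applied separately to $M$ and $\hat{M}$, and then to compare the two suprema by isolating the only term that distinguishes the two regimes: the $\mathcal{H}$-divergence between the training distribution of $\hat{M}$ and the common target distribution $P_T = P(\bm{x}\,|\,G,\mathcal{T}_t)$.

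First I would instantiate \cref{eq} for each model. Since $M$ and $\hat{M}$ share the hypothesis space $\mathcal{H}$ of VC dimension $d$ and are trained on datasets of comparable size $m$, the complexity term $\sqrt{\tfrac{4}{m}\left(d\log\tfrac{2em}{d}\right)}$ and the constant $\lambda$ are common to both bounds. Assuming, as in the TrainProVe setting, that each model fits its own training set so that the empirical errors $\hat{\epsilon}_{S_1}(M)$ and $\hat{\epsilon}_{S_2}(\hat{M})$ are negligible and effectively equal, the bound reduces (up to a common constant $c$) to $\epsilon_T(M) \leq d_{\mathcal{H}}(P_1,P_T)+c$ and $\epsilon_T(\hat{M}) \leq d_{\mathcal{H}}(P_2,P_T)+c$. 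Thus the generalization error of each model is governed solely by its own source-to-target divergence, with $P_1 = P(\bm{x}\,|\,G,\mathcal{T}_1)$ fixed throughout.

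Next, because $\epsilon_T(M)$ and $\epsilon_T(\hat{M})$ are both non-negative, I would use $|\Delta\epsilon_T| = |\epsilon_T(M)-\epsilon_T(\hat{M})| \leq \max\{\epsilon_T(M),\epsilon_T(\hat{M})\}$. As $\epsilon_T(M)$ does not depend on $P_2$, it is a fixed quantity in both suprema, so the behaviour of $\sup|\Delta\epsilon_T|$ is controlled by $\sup\, d_{\mathcal{H}}(P_2,P_T)$ (possibly combined with the triangle inequality $d_{\mathcal{H}}(P_2,P_T)\le d_{\mathcal{H}}(P_2,P_1)+d_{\mathcal{H}}(P_1,P_T)$ to relate the two training sources). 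The decisive step is then the comparison
\[
\sup_{\mathcal{T}_2} d_{\mathcal{H}}\big(P(\bm{x}\,|\,G,\mathcal{T}_2),\, P_T\big) \;\leq\; \sup_{P_2 \perp G} d_{\mathcal{H}}\big(P_2,\, P_T\big),
\]
from which the claimed inequality between the two suprema of $|\Delta\epsilon_T|$ follows.

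The main obstacle is precisely this last comparison, which does not follow from \cref{eq} alone and must be supplied as a structural property of generative models. It asserts that the family of output distributions $\{P(\bm{x}\,|\,G,\mathcal{T}_2)\}_{\mathcal{T}_2}$ produced by a single model $G$ under varying prompts is more tightly concentrated, in $\mathcal{H}$-divergence, around the same-model target $P_T=P(\bm{x}\,|\,G,\mathcal{T}_t)$ than any distribution arising from an independent source, whether a different model $G'$ or real data $P_R$. Intuitively, a fixed $G$ imprints a characteristic, prompt-insensitive signature on its samples, so same-model distributions cluster closer to one another than to out-of-model distributions. I would therefore state this concentration property as an explicit assumption (or auxiliary lemma) on $G$ and argue that it is what the empirical divergence measurements corroborate, since the entire argument — and hence the contrapositive reading that underpins TrainProVe — hinges on it.
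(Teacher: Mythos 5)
Your proposal follows essentially the same route as the paper's proof: both instantiate the Ben-David et al.\ bound for $M$ and $\hat{M}$, assume equal sample sizes and equal (negligible) empirical errors so that the complexity and $\lambda$ terms cancel and only the $\mathcal{H}$-divergence terms differ, and thereby reduce the claim to the comparison $d_{\mathcal{H}}(P(\bm{x}|G,\mathcal{T}_2),P_T) \leq d_{\mathcal{H}}(P_2(\bm{x}),P_T)$ for $P_2(\bm{x}) \perp G$. The one step you flag as an unproven structural assumption --- that distributions produced by the same generator under different prompts are closer in $\mathcal{H}$-divergence to $P_T$ than any independent source --- is exactly the step the paper also asserts without proof (justified only by the informal remark that the generative model is shared while the text conditions differ), so your treatment matches the paper's argument and is, if anything, more explicit about the logical status of that step.
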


\begin{proof}
Theorem 1 in~\cite{ben2006analysis} states the upper bound of the generalization error of model \( M' \) on the target domain $T$. Let \( \mathcal{H} \) be a hypothesis space with VC dimension \( d' \), and let \( m' \) be the sample size of the dataset in the source domain $S$. Then, with probability at least \( 1 - \delta \), for every \( M' \in \mathcal{H} \):
\begin{equation}
    \epsilon_T(M') \leq \hat{\epsilon}_S(M') + \sqrt{\frac{4}{m'}\left(d'\log\frac{2em'}{d'}\right)}+d_{\mathcal{H}}(P_S,P_T)+\lambda,
\label{eq3}
\end{equation}
where \( \epsilon_T(M') \) and \( \hat{\epsilon_S}(M') \) represent the generalization error of $M'$ on the target domain and the empirical error of $M'$ on the source domain, respectively. \( P_S \) and \( P_T \) denote the marginal probability distributions of the source and target domains, respectively. \( d_\mathcal{H}(P_S,P_T) \) is the \( \mathcal{H} \)-divergence between the source and target domains, which measures the similarity between the two distributions (source and target) within the hypothesis space \( \mathcal{H} \). A smaller \( d_\mathcal{H}(P_S(\bm{x}),P_T(\bm{x})) \) indicates that the distributions of the source and target domains are closer. \(\lambda\) is a constant and $e$ is the base of the natural logarithm.

Substitute \( P_T= P(\bm{x}| G, \mathcal{T}_t) \) into \cref{eq3}, for \( M \) with the source domain distribution \( P(\bm{x}|G, \mathcal{T}_1) \):
\begin{equation}
\begin{split}
\epsilon_T(M) \leq \hat{\epsilon}_S(M) + \sqrt{\frac{4}{m}\left(d\log\frac{2em}{d}\right)} \\
+d_{\mathcal{H}}(P(\bm{x}|G, \mathcal{T}_1),P(\bm{x}| G, \mathcal{T}_t))+\lambda
\end{split}
\label{eq4}
\end{equation}
For \( \hat{M} \) with the source domain distribution \( P(x|G, \mathcal{T}_2) \), similarly:
\begin{equation}
\begin{split}
\epsilon_T(\hat{M}_1) \leq \hat{\epsilon}_S(\hat{M}_1) + \sqrt{\frac{4}{\hat{m}_1}\left(\hat{d}\log\frac{2e\hat{m}_1}{\hat{d}}\right)} \\
+d_{\mathcal{H}}(P(\bm{x}|G, \mathcal{T}_2),P(\bm{x}| G, \mathcal{T}_t))+\lambda
\end{split}
\label{eq5}
\end{equation}
To distinguish from later cases, we denote \( \hat{M} \) trained in this situation as \( \hat{M}_1 \) and \( m \) as \( \hat{m}_1 \).
For \( \hat{M} \) with the source domain distribution \( P(x|G', \mathcal{T}_2) \), similarly:
\begin{equation}
\begin{split}
\epsilon_T(\hat{M}_2) \leq \hat{\epsilon}_S(\hat{M}_2) + \sqrt{\frac{4}{\hat{m}_2}\left(\hat{d}\log\frac{2e\hat{m}_2}{\hat{d}}\right)} \\
+d_{\mathcal{H}}(P(\bm{x}|G', \mathcal{T}_2),P(\bm{x}| G, \mathcal{T}_t))+\lambda
\end{split}
\label{eq6}
\end{equation}
We denote \( \hat{M} \) trained in this situation as \( \hat{M}_2 \) and \( m \) as \( \hat{m}_2 \).

We denote the upper bounds in \cref{eq4}, \cref{eq5}, and \cref{eq6} as \( \xi \), \( \hat{\xi}_1 \), and \( \hat{\xi}_2 \), respectively. Then we calculate the upper bound of $|\epsilon_T(M)-\epsilon_T(\hat{M}_1)|$:
\begin{equation}
\sup{|\epsilon_T(M)-\epsilon_T(\hat{M}_1)|} = \max\{\xi,\hat{\xi}_1\}
\end{equation}
For the upper bound of $|\epsilon_T(M)-\epsilon_T(\hat{M}_2)|$:
\begin{equation}
\sup{|\epsilon_T(M)-\epsilon_T(\hat{M}_2)|} = \max\{\xi,\hat{\xi}_2\}
\end{equation}

Assuming that \( \hat{M} \) is trained on the same amount of source domain data in both cases, i.e., \( \hat{m}_1 = \hat{m}_2 \). Additionally, assuming the empirical errors on the training set are also the same, i.e., \( \hat{\epsilon}_S(\hat{M}_1) = \hat{\epsilon}_S(\hat{M}_2) \). Therefore, the only difference between \( \hat{\xi}_1 \) and \( \hat{\xi}_2 \) lies in the \( d_\mathcal{H}(\cdot,\cdot) \) term. Since the generative model in the conditions for \( P(\bm{x}|G, \mathcal{T}_2) \) and \( P(\bm{x}|G, \mathcal{T}_t) \) is the same while the text conditions differ, whereas both differ in the conditions for \( P(\bm{x}|G', \mathcal{T}_2) \) and \( P(\bm{x}|G, \mathcal{T}_t) \), it follows that:
\begin{equation}
d_{\mathcal{H}}(P(\bm{x}|G, \mathcal{T}_2),P(\bm{x}| G, \mathcal{T}_t)) \leq d_{\mathcal{H}}(P(\bm{x}|G', \mathcal{T}_2),P(\bm{x}| G, \mathcal{T}_t))
\end{equation}
Therefore, \( \hat{\xi}_1 \leq \hat{\xi}_2 \), i.e., 
\begin{equation}
\sup{|\epsilon_T(M)-\epsilon_T(\hat{M}_1)|} \leq \sup{|\epsilon_T(M)-\epsilon_T(\hat{M}_2)|}
\end{equation}
Alternatively, it can be expressed as:
\begin{equation}
\sup_{P_2(\bm{x}) = P(\bm{x} | G, \mathcal{T}_2)}{|\Delta\epsilon_T|} \leq \sup_{P_2(\bm{x}) = P(\bm{x} | G', \mathcal{T}_2)}{|\Delta\epsilon_T|},
\label{eq11}
\end{equation}
where \(\Delta\epsilon_T=\epsilon_T(M)-\epsilon_T(\hat{M})\).

Similarly, when \( \hat{M} \) is trained on real data \( \bm{x} \sim P_R(\bm{x}) \), we have:
\begin{equation}
\sup_{P_2(\bm{x}) = P(\bm{x} | G, \mathcal{T}_2)}{|\Delta\epsilon_T|} \leq \sup_{P_2(\bm{x}) = P_R(\bm{x})}{|\Delta\epsilon_T|}
\label{eq12}
\end{equation}
By combining \cref{eq11} and \cref{eq12}, we obtain:
\begin{equation}
    \sup_{P_2(\bm{x}) = P(\bm{x} | G, \mathcal{T}_2)} |\Delta\epsilon_T| \leq \sup_{P_2(\bm{x}) \perp G}  |\Delta\epsilon_T|,
\end{equation}
\end{proof}

\section{The Details of Experiments}
\label{sec:app_exp}

\subsection{Datasets Used}
\label{app_dataset}

\paragraph{CIFAR10}~\cite{krizhevsky2009learning}: The CIFAR10 dataset consists of 32x32 colored images with 10 classes. There are 50000 training images and 10000 test images. 

\paragraph{CIFAR100}~\cite{krizhevsky2009learning}: The CIFAR100 dataset consists of 32x32 coloured images with 100 classes. There are 50000 training images and 10000 test images. 

\paragraph{ImageNet-100}~\cite{sariyildiz2023fake}: A randomly chosen subset of ImageNet-1K~\cite{deng2009imagenet}, which has larger sized coloured images with 100 classes. There are approximately 126,689 training images and 5000 test images. As is commonly done, we resize all images to be of size 224x224. The specific categories we use are listed in ImageNet-100.txt in the supplementary material.

\subsection{Pre-trained Text-to-image Models Used}
\label{app_G}

We use four pre-trained text-to-image models, as follows:

\paragraph{Stable Diffusion v1.4}~\cite{rombach2022high}: Stable Diffusion v1.4 is trained on 512x512 images from a subset of the LAION-5B~\cite{schuhmann2022laion} dataset. This model uses a frozen CLIP ViT-L/14 text encoder to condition the model on text prompts. Its pre-trained weights can be obtained from \url{https://huggingface.co/CompVis/stable-diffusion-v1-4}.

\paragraph{Latent Consistency Model}~\cite{luo2023latent}: Latent consistency model is trained on 768x768 images from a subset of the LAION-5B~\cite{schuhmann2022laion} dataset, named LAION-Aesthetics. Its pre-trained weights can be obtained from \url{https://huggingface.co/SimianLuo/LCM_Dreamshaper_v7}.

\paragraph{Pixart-$\alpha$}~\cite{chenpixart}: Pixart-$\alpha$ is trained on images from SAM~\cite{kirillov2023segment} dataset, whose text prompts are generated by LLaVA~\cite{liu2024visual}. Its pre-trained weights can be obtained from \url{https://huggingface.co/PixArt-alpha/PixArt-XL-2-512x512}.

\paragraph{Stable Cascade}~\cite{perniaswurstchen}: Stable Cascade is built upon the Würstchen architecture~\cite{perniaswurstchen}, and trained on images from a subset of the LAION-5B~\cite{schuhmann2022laion} dataset. Its pre-trained weights can be obtained from \url{https://huggingface.co/stabilityai/stable-cascade}.

\subsection{Parameter Setting Details}
\label{app_setting_detail}

\begin{itemize}[leftmargin=*]
    \item \textbf{Experiments conducted on CIFAR10/CIFAR100}: We use the methods in~\cite{shipard2023diversity} to generate 60,000 images as CIFAR10-Syn and CIFAR100-Syn for training the suspicious models. Specifically, we generate 20,000 images each using the techniques of ``Class Prompt'', ``Multi-Domain'', and ``Random Unconditional Guidance''. Additionally, the shadow dataset and validation dataset retain default parameters except for the text prompts and resolution. Focal loss with \(\gamma = 2\) and \(\alpha = 0.25\). All models are trained using the AdamW optimizer.
    
    \item \textbf{Experiments conducted on ImageNet-100}: We use the methods in~\cite{sariyildiz2023fake} to generate 100,000 images as ImageNet-100-Syn for training the suspicious models. Specifically, Specifically, we use ``$c, d_c$'' to generate these images, where $c$ is the class name, and $d_c$ refers to the definition of class $c$ provided by WordNet~\cite{miller1995wordnet}. Guidance scale is 2. Additionally, the shadow dataset and validation dataset retain default parameters except for the text prompts and resolution.  Focal loss with \(\gamma = 2\) and \(\alpha = 0.25\). All models are trained using the AdamW optimizer.
\end{itemize}

\subsection{The Details of Han \textit{\textbf{et al.}}'s Work}
\label{app_hans_detail}
Unlike our setup, the work by Han \etal assumes that the defender has access to multiple different generative models to train an attributor. To enable comparison with this method, we followed this setup when using it. Specifically, we assume that the defender, in addition to having $G_d$, also possesses the other three text-to-image models mentioned in the paper (whereas in our setup, the defender has only one text-to-image model, $G_d$). Using these generative models, we create four shadow datasets following the same approach, as well as one validation dataset generated using $G_d$ in the same manner. 

Based on each shadow dataset, we train 16 shadow models, all of which are ResNet18 with varied training hyperparameters. In total, we obtain 64 shadow models. Using these shadow models, we infer logits on the validation dataset to train an attributor. The attributor is a two-layer fully connected network with an input dimension matching the dimension of the logits and an output dimension of 2. It classifies logits of shadow models trained on synthetic data generated by $G_d$ as 1 and others as 0. The attributor is trained for 10 epochs.

\begin{table*}
\small
\centering
\addtolength{\tabcolsep}{-1.5pt}
\begin{tabular}{ccccccccc}

\Xhline{1.0pt}
 \multicolumn{2}{c}{\textbf{The Suspect's Data Sources}} &\multirow{2}{*}{\textbf{TP}} &\multirow{2}{*}{\textbf{FP}} &\multirow{2}{*}{\textbf{FN}} &\multirow{2}{*}{\textbf{TN}} &\multirow{2}{*}{\textbf{Accuracy}} &\multirow{2}{*}{\textbf{F1 Score}} &\multirow{2}{*}{\textbf{AUROC}} \\ \cline{1-2}
 \textbf{The Defender's Generative Model} $G_d$ &\textbf{Data Sources Unrelated to} \( G_d \) & & & \\ \hline
\multirow{4}{*}{Stable Diffusion~\cite{rombach2022high} v1.4} 
 &Latent Consistency Model &\multirow{4}{*}{32} &\multirow{4}{*}{32} &32 &32 &\multirow{4}{*}{0.500} &\multirow{4}{*}{0.286} &\multirow{4}{*}{0.500} \\
 &PixArt-$\alpha$ & & &32 &32 & & & \\
 &Stable Cascade & & &32 &32 & & & \\
 &Real Data & & &32 &32 & & & \\ \cline{1-9}
\multirow{4}{*}{Latent Consistency Model~\cite{luo2023latent}} 
 &Stable Diffusion v1.4 &\multirow{4}{*}{32} &\multirow{4}{*}{32} &32 &32 &\multirow{4}{*}{0.500} &\multirow{4}{*}{0.286} &\multirow{4}{*}{0.500} \\
 &PixArt-$\alpha$ & & &32 &32 & & & \\
 &Stable Cascade & & &32 &32 & & & \\
 &Real Data & & &32 &32 & & & \\ \cline{1-9}
 \multirow{4}{*}{PixArt-$\alpha$~\cite{chenpixart}} 
 &Stable Diffusion v1.4 &\multirow{4}{*}{32} &\multirow{4}{*}{32} &32 &32 &\multirow{4}{*}{0.500} &\multirow{4}{*}{0.286} &\multirow{4}{*}{0.500} \\
 &Latent Consistency Model & & &32 &32 & & & \\
 &Stable Cascade & & &32 &32 & & & \\
 &Real Data & & &32 &32 & & & \\ \cline{1-9}
 \multirow{4}{*}{Stable Cascade~\cite{perniaswurstchen}} 
 &Stable Diffusion v1.4 &\multirow{4}{*}{32} &\multirow{4}{*}{32} &32 &32 &\multirow{4}{*}{0.500} &\multirow{4}{*}{0.286} &\multirow{4}{*}{0.500} \\
 &Latent Consistency Model & & &32 &32 & & & \\
 &PixArt-$\alpha$ & & &32 &32 & & & \\
 &Real Data & & &32 &32 & & & \\ \cline{1-9}
 \textbf{Average Value} & & & & & &0.500 &0.286 &0.500 \\
  \Xhline{1.0pt}
\end{tabular}
\vspace{-1em}
\caption{The results of random classification on CIFAR10.}
\label{tab:rand_cifar10}
\end{table*}

\subsection{The Details of Interference Resistance Experiments}
\paragraph{Multiple training data sources for \(M_{sus}\).} In this experiment, the methods for generating the synthetic dataset and training the network follow the settings in \cref{sec:para_setting}, with the only difference being that the training dataset for the suspicious model consists of both the synthetic and real datasets.

\paragraph{Fine-tuning \(G_{d}\).} In this experiment, we fine-tune \( G_d \) using LoRA with a batch size of 16, a learning rate of 1e-4 and a resolution of 512. The methods for generating dataset and training network follow the settings in \cref{sec:para_setting}.

\paragraph{Fine-tuning \(M_{sus}\).} In this experiment, we fine-tune the suspicious model with a learning rate of 1e-5, a weight decay of 1e-4, a batch size of 64, and the cross-entropy loss function. The methods for generating the synthetic dataset and training the network follow the settings in \cref{sec:para_setting}.

\subsection{Hypothesis Testing}
We represent the generalization errors of $M_{sdw}$ and $M_{sus}$ by the sets of accuracies $\mathcal{A}_{sdw}$ and $\mathcal{A}_{sus}$ for each batch on the validation dataset. Then we use the Grubbs test to determine whether the mean of $\mathcal{A}_{sus}$ is a low-value outlier of $\mathcal{A}_{sdw}$. The pseudocode is shown in \cref{alg}, where $t_{1-\frac{\alpha}{n},n-2}$ is the critical value of the t-distribution with degrees of freedom $n-2$ and significance level $\alpha$. When \( G > G_0 \), we consider \( mean(\mathcal{A}_{sus}) \) to be a low-value outlier of \( \mathcal{A}_{sdw} \), meaning the generalization error gap between two models is significant, and $M_{sus}$ is deemed legitimate. When \( G \leq G_0 \), \( mean(\mathcal{A}_{sus}) \) falls within the distribution of \( \mathcal{A}_{sdw} \), indicating that the generalization errors of two models are close, and $M_{sus}$ is deemed illegal.

\begin{algorithm}[ht]
\begin{algorithmic}[1] 
\caption{Grubbs’ Hypothesis Test}
\State \textbf{Input}: $\mathcal{A}_{sdw}$, $\mathcal{A}_{sus}$, significance level $\alpha=0.05$
\State $m_{sdw} = mean(\mathcal{A}_{sdw}), s_{sdw} = std(\mathcal{A}_{sdw})$
\State $n = len(\mathcal{A}_{sdw}), m_{sus} = mean(\mathcal{A}_{sus})$
\State $G = \frac{m_{sdw}-m_{sus}}{s_{sdw}}, G_0 = \frac{(n-1)}{\sqrt{n}}\sqrt{\frac{{(t_{1-\frac{\alpha}{n},n-2})}^2}{n-2+{(t_{1-\frac{\alpha}{n},n-2})}^2}}$
\State \textbf{Output}: $G>G_0$
\label{alg}
\end{algorithmic}
\end{algorithm}

\section{Results on CLIP}
We fine-tuned CLIP~\cite{radford2021learning} on ImageNet-100 using hyperparameters in \cref{sec:para_setting}, with ResNet50 and ViT-B. Then We calculate accuracies of TrainProVe and Han \etal's work on Stable Diffusion v1.4. Finally, the accuracy of TrainProVe reached 0.75, while Han \etal's work only achieved 0.56, meaning that compared to the baseline, TrainProVe can still be applied to more complex scenarios.

\section{More Experimental Results}
\label{app_more_res}
Here, we present the specific results of different methods on various datasets (as shown in \cref{tab:acc_cifar10} of the paper), as shown in \cref{tab:rand_cifar10} - \cref{tab:acc_imagenet100}.

\begin{table*}
\small
\centering
\vspace{-3em}
\addtolength{\tabcolsep}{-1.5pt}
\begin{tabular}{ccccccccc}

\Xhline{1.0pt}
  \multicolumn{2}{c}{\textbf{The Suspect's Data Sources}} &\multirow{2}{*}{\textbf{TP}} &\multirow{2}{*}{\textbf{FP}} &\multirow{2}{*}{\textbf{FN}} &\multirow{2}{*}{\textbf{TN}} &\multirow{2}{*}{\textbf{Accuracy}} &\multirow{2}{*}{\textbf{F1 Score}} &\multirow{2}{*}{\textbf{AUROC}} \\ \cline{1-2}
 \textbf{The Defender's Generative Model} $G_d$ &\textbf{Data Sources Unrelated to} \( G_d \) & & & & & &  \\ \hline
\multirow{4}{*}{Stable Diffusion~\cite{rombach2022high} v1.4} 
 &Latent Consistency Model &\multirow{4}{*}{60} &\multirow{4}{*}{4} &0 &64 &\multirow{4}{*}{0.756} &\multirow{4}{*}{0.606} &\multirow{4}{*}{0.824} \\
 &PixArt-$\alpha$ & & &0 &64 & & & \\
 &Stable Cascade & & &14 &50 & & & \\
 &Real Data & & &60 &4 & & & \\ \cline{1-9}
\multirow{4}{*}{Latent Consistency Model~\cite{luo2023latent}} 
 &Stable Diffusion v1.4 &\multirow{4}{*}{64} &\multirow{4}{*}{0} &4 &60 &\multirow{4}{*}{0.969} &\multirow{4}{*}{0.928} &\multirow{4}{*}{0.980} \\
 &PixArt-$\alpha$ & & &0 &64 & & & \\
 &Stable Cascade & & &2 &62 & & & \\
 &Real Data & & &4 &60 & & & \\ \cline{1-9}
 \multirow{4}{*}{PixArt-$\alpha$~\cite{chenpixart}} 
 &Stable Diffusion v1.4 &\multirow{4}{*}{47} &\multirow{4}{*}{17} &0 &64 &\multirow{4}{*}{0.919} &\multirow{4}{*}{0.783} &\multirow{4}{*}{0.850} \\
 &Latent Consistency Model & & &0 &64 & & & \\
 &Stable Cascade & & &9 &55 & & & \\
 &Real Data & & &0 &64 & & & \\ \cline{1-9}
 \multirow{4}{*}{Stable Cascade~\cite{perniaswurstchen}} 
 &Stable Diffusion v1.4 &\multirow{4}{*}{0} &\multirow{4}{*}{64} &60 &4 &\multirow{4}{*}{0.428} &\multirow{4}{*}{0.000} &\multirow{4}{*}{0.268} \\
 &Latent Consistency Model & & &53 &11 & & & \\
 &PixArt-$\alpha$ & & &0 &64 & & & \\
 &Real Data & & &6 &58 & & & \\\cline{1-9}
 \textbf{Average Value} & & & & & &0.768 &0.579 &0.731 \\
  \Xhline{1.0pt}
\end{tabular}
\vspace{-1em}
\caption{The results of Han \etal's method on CIFAR10.}
\vspace{-1.5em}
\label{tab:hans_cifar10}
\end{table*}

\begin{table*}
\small
\centering
\addtolength{\tabcolsep}{-1.5pt}
\begin{tabular}{ccccccccc}

\Xhline{1.0pt}
   \multicolumn{2}{c}{\textbf{The Suspect's Data Sources}} &\multirow{2}{*}{\textbf{TP}} &\multirow{2}{*}{\textbf{FP}} &\multirow{2}{*}{\textbf{FN}} &\multirow{2}{*}{\textbf{TN}} &\multirow{2}{*}{\textbf{Accuracy}} &\multirow{2}{*}{\textbf{F1 Score}} &\multirow{2}{*}{\textbf{AUROC}} \\ \cline{1-2}
 \textbf{The Defender's Generative Model} $G_d$ &\textbf{Data Sources Unrelated to} \( G_d \) & & &  \\ \hline
\multirow{4}{*}{Stable Diffusion~\cite{rombach2022high} v1.4} 
 &Latent Consistency Model &\multirow{4}{*}{64} &\multirow{4}{*}{0} &4 &60 &\multirow{4}{*}{0.941} &\multirow{4}{*}{0.871} &\multirow{4}{*}{0.963} \\
 &PixArt-$\alpha$ & & &4 &60 & & & \\
 &Stable Cascade & & &7 &57 & & & \\
 &Real Data & & &4 &60 & & & \\ \cline{1-9}
\multirow{4}{*}{Latent Consistency Model~\cite{luo2023latent}} 
 &Stable Diffusion v1.4 &\multirow{4}{*}{22} &\multirow{4}{*}{42} &0 &64 &\multirow{4}{*}{0.869} &\multirow{4}{*}{0.512} &\multirow{4}{*}{0.672} \\
 &PixArt-$\alpha$ & & &0 &64 & & & \\
 &Stable Cascade & & &0 &64 & & & \\
 &Real Data & & &0 &64 & & & \\ \cline{1-9}
 \multirow{4}{*}{PixArt-$\alpha$~\cite{chenpixart}} 
 &Stable Diffusion v1.4 &\multirow{4}{*}{64} &\multirow{4}{*}{0} &25 &39 &\multirow{4}{*}{0.609} &\multirow{4}{*}{0.506} &\multirow{4}{*}{0.756} \\
 &Latent Consistency Model & & &32 &32 & & & \\
 &Stable Cascade & & &64 &0 & & & \\
 &Real Data & & &4 &60 & & & \\ \cline{1-9}
 \multirow{4}{*}{Stable Cascade~\cite{perniaswurstchen}} 
 &Stable Diffusion v1.4 &\multirow{4}{*}{64} &\multirow{4}{*}{0} &17 &47 &\multirow{4}{*}{0.828} &\multirow{4}{*}{0.699} &\multirow{4}{*}{0.893} \\
 &Latent Consistency Model & & &5 &59 & & & \\
 &PixArt-$\alpha$ & & &19 &45 & & & \\
 &Real Data & & &14 &50 & & & \\ \cline{1-9}
 \textbf{Average Value} & & & & & &0.812 &0.647 &0.821 \\
  \Xhline{1.0pt}
\end{tabular}
\vspace{-1em}
\caption{The results of TrainProVe-Sim on CIFAR10.}
\vspace{-1.5em}
\label{tab:sim_cifar10}
\end{table*}

\begin{table*}
\small
\centering
\addtolength{\tabcolsep}{-1.5pt}
\begin{tabular}{ccccccccc}

\Xhline{1.0pt}
   \multicolumn{2}{c}{\textbf{The Suspect's Data Sources}} &\multirow{2}{*}{\textbf{TP}} &\multirow{2}{*}{\textbf{FP}} &\multirow{2}{*}{\textbf{FN}} &\multirow{2}{*}{\textbf{TN}} &\multirow{2}{*}{\textbf{Accuracy}} &\multirow{2}{*}{\textbf{F1 Score}} &\multirow{2}{*}{\textbf{AUROC}} \\ \cline{1-2}
 \textbf{The Defender's Generative Model} $G_d$ &\textbf{Data Sources Unrelated to} \( G_d \) & & & & & &  \\ \hline
\multirow{4}{*}{Stable Diffusion~\cite{rombach2022high} v1.4} 
 &Latent Consistency Model &\multirow{4}{*}{57} &\multirow{4}{*}{7} &1 &63 &\multirow{4}{*}{0.928} &\multirow{4}{*}{0.832} &\multirow{4}{*}{0.914} \\
 &PixArt-$\alpha$ & & &0 &64 & & & \\
 &Stable Cascade & & &11 &53 & & & \\
 &Real Data & & &4 &60 & & & \\ \cline{1-9}
\multirow{4}{*}{Latent Consistency Model~\cite{luo2023latent}} 
 &Stable Diffusion v1.4 &\multirow{4}{*}{60} &\multirow{4}{*}{4} &0 &64 &\multirow{4}{*}{0.988} &\multirow{4}{*}{0.968} &\multirow{4}{*}{0.969} \\
 &PixArt-$\alpha$ & & &0 &64 & & & \\
 &Stable Cascade & & &0 &64 & & & \\
 &Real Data & & &0 &64 & & & \\ \cline{1-9}
 \multirow{4}{*}{PixArt-$\alpha$~\cite{chenpixart}} 
 &Stable Diffusion v1.4 &\multirow{4}{*}{64} &\multirow{4}{*}{0} &9 &55 &\multirow{4}{*}{0.725} &\multirow{4}{*}{0.593} &\multirow{4}{*}{0.828} \\
 &Latent Consistency Model & & &22 &42 & & & \\
 &Stable Cascade & & &54 &10 & & & \\
 &Real Data & & &3 &61 & & & \\ \cline{1-9}
 \multirow{4}{*}{Stable Cascade~\cite{perniaswurstchen}} 
 &Stable Diffusion v1.4 &\multirow{4}{*}{64} &\multirow{4}{*}{0} &9 &55 &\multirow{4}{*}{0.831} &\multirow{4}{*}{0.703} &\multirow{4}{*}{0.895} \\
 &Latent Consistency Model & & &16 &48 & & & \\
 &PixArt-$\alpha$ & & &23 &41 & & & \\
 &Real Data & & &6 &58 & & & \\ \cline{1-9}
 \textbf{Average Value} & & & & & &0.868 &0.774 &0.902 \\
  \Xhline{1.0pt}
\end{tabular}
\vspace{-1em}
\caption{The results of TrainProVe-Ent on CIFAR10.}
\vspace{-1.5em}
\label{tab:ent_cifar10}
\end{table*}

\begin{table*}
\small
\centering
\vspace{-3em}
\addtolength{\tabcolsep}{-1.5pt}
\begin{tabular}{ccccccccc}

\Xhline{1.0pt}
   \multicolumn{2}{c}{\textbf{The Suspect's Data Sources}} &\multirow{2}{*}{\textbf{TP}} &\multirow{2}{*}{\textbf{FP}} &\multirow{2}{*}{\textbf{FN}} &\multirow{2}{*}{\textbf{TN}} &\multirow{2}{*}{\textbf{Accuracy}} &\multirow{2}{*}{\textbf{F1 Score}} &\multirow{2}{*}{\textbf{AUROC}} \\ \cline{1-2}
 \textbf{The Defender's Generative Model} $G_d$ &\textbf{Data Sources Unrelated to} \( G_d \) & & & & & &  \\ \hline
\multirow{4}{*}{Stable Diffusion~\cite{rombach2022high} v1.4} 
 &Latent Consistency Model &\multirow{4}{*}{32} &\multirow{4}{*}{32} &32 &32 &\multirow{4}{*}{0.500} &\multirow{4}{*}{0.286} &\multirow{4}{*}{0.500} \\
 &PixArt-$\alpha$ & & &32 &32 & & & \\
 &Stable Cascade & & &32 &32 & & & \\
 &Real Data & & &32 &32 & & & \\ \cline{1-9}
\multirow{4}{*}{Latent Consistency Model~\cite{luo2023latent}} 
 &Stable Diffusion v1.4 &\multirow{4}{*}{32} &\multirow{4}{*}{32} &32 &32 &\multirow{4}{*}{0.500} &\multirow{4}{*}{0.286} &\multirow{4}{*}{0.500} \\
 &PixArt-$\alpha$ & & &32 &32 & & & \\
 &Stable Cascade & & &32 &32 & & & \\
 &Real Data & & &32 &32 & & & \\ \cline{1-9}
 \multirow{4}{*}{PixArt-$\alpha$~\cite{chenpixart}} 
 &Stable Diffusion v1.4 &\multirow{4}{*}{32} &\multirow{4}{*}{32} &32 &32 &\multirow{4}{*}{0.500} &\multirow{4}{*}{0.286} &\multirow{4}{*}{0.500} \\
 &Latent Consistency Model & & &32 &32 & & & \\
 &Stable Cascade & & &32 &32 & & & \\
 &Real Data & & &32 &32 & & & \\ \cline{1-9}
 \multirow{4}{*}{Stable Cascade~\cite{perniaswurstchen}} 
 &Stable Diffusion v1.4 &\multirow{4}{*}{32} &\multirow{4}{*}{32} &32 &32 &\multirow{4}{*}{0.500} &\multirow{4}{*}{0.286} &\multirow{4}{*}{0.500} \\
 &Latent Consistency Model & & &32 &32 & & & \\
 &PixArt-$\alpha$ & & &32 &32 & & & \\
 &Real Data & & &32 &32 & & & \\ \cline{1-9}
 \textbf{Average Value} & & & & & &0.500 &0.286 &0.500 \\
  \Xhline{1.0pt}
\end{tabular}
\vspace{-1em}
\caption{The results of random classification on CIFAR100.}
\vspace{-1.5em}
\label{tab:rand_cifar100}
\end{table*}

\begin{table*}
\small
\centering
\addtolength{\tabcolsep}{-1.5pt}
\begin{tabular}{ccccccccc}

\Xhline{1.0pt}
   \multicolumn{2}{c}{\textbf{The Suspect's Data Sources}} &\multirow{2}{*}{\textbf{TP}} &\multirow{2}{*}{\textbf{FP}} &\multirow{2}{*}{\textbf{FN}} &\multirow{2}{*}{\textbf{TN}} &\multirow{2}{*}{\textbf{Accuracy}} &\multirow{2}{*}{\textbf{F1 Score}} &\multirow{2}{*}{\textbf{AUROC}} \\ \cline{1-2}
 \textbf{The Defender's Generative Model} $G_d$ &\textbf{Data Sources Unrelated to} \( G_d \) & & & & & &  \\ \hline
\multirow{4}{*}{Stable Diffusion~\cite{rombach2022high} v1.4} 
 &Latent Consistency Model &\multirow{4}{*}{64} &\multirow{4}{*}{0} &0 &64 &\multirow{4}{*}{0.772} &\multirow{4}{*}{0.637} &\multirow{4}{*}{0.861} \\
 &PixArt-$\alpha$ & & &9 &55 & & & \\
 &Stable Cascade & & &32 &32 & & & \\
 &Real Data & & &32 &32 & & & \\ \cline{1-9}
\multirow{4}{*}{Latent Consistency Model~\cite{luo2023latent}} 
 &Stable Diffusion v1.4 &\multirow{4}{*}{64} &\multirow{4}{*}{0} &0 &64 &\multirow{4}{*}{1.000} &\multirow{4}{*}{1.000} &\multirow{4}{*}{1.000} \\
 &PixArt-$\alpha$ & & &0 &64 & & & \\
 &Stable Cascade & & &0 &64 & & & \\
 &Real Data & & &0 &64 & & & \\ \cline{1-9}
 \multirow{4}{*}{PixArt-$\alpha$~\cite{chenpixart}} 
 &Stable Diffusion v1.4 &\multirow{4}{*}{36} &\multirow{4}{*}{28} &0 &64 &\multirow{4}{*}{0.791} &\multirow{4}{*}{0.518} &\multirow{4}{*}{0.705} \\
 &Latent Consistency Model & & &0 &64 & & & \\
 &Stable Cascade & & &32 &32 & & & \\
 &Real Data & & &7 &57 & & & \\ \cline{1-9}
 \multirow{4}{*}{Stable Cascade~\cite{perniaswurstchen}} 
 &Stable Diffusion v1.4 &\multirow{4}{*}{4} &\multirow{4}{*}{60} &28 &36 &\multirow{4}{*}{0.356} &\multirow{4}{*}{0.037} &\multirow{4}{*}{0.246} \\
 &Latent Consistency Model & & &64 &0 & & & \\
 &PixArt-$\alpha$ & & &47 &17 & & & \\
 &Real Data & & &7 &57 & & & \\ \cline{1-9}
 \textbf{Average Value} & & & & & &0.730 &0.548 &0.703 \\
  \Xhline{1.0pt}
\end{tabular}
\vspace{-1em}
\caption{The results of Han \etal's Method on CIFAR100.}
\vspace{-1.5em}
\label{tab:hans_cifar100}
\end{table*}

\begin{table*}
\small
\centering
\addtolength{\tabcolsep}{-1.5pt}
\begin{tabular}{ccccccccc}

\Xhline{1.0pt}
   \multicolumn{2}{c}{\textbf{The Suspect's Data Sources}} &\multirow{2}{*}{\textbf{TP}} &\multirow{2}{*}{\textbf{FP}} &\multirow{2}{*}{\textbf{FN}} &\multirow{2}{*}{\textbf{TN}} &\multirow{2}{*}{\textbf{Accuracy}} &\multirow{2}{*}{\textbf{F1 Score}} &\multirow{2}{*}{\textbf{AUROC}} \\ \cline{1-2}
 \textbf{The Defender's Generative Model} $G_d$ &\textbf{Data Sources Unrelated to} \( G_d \) & & & & & &  \\ \hline
\multirow{4}{*}{Stable Diffusion~\cite{rombach2022high} v1.4} 
 &Latent Consistency Model &\multirow{4}{*}{28} &\multirow{4}{*}{36} &32 &32 &\multirow{4}{*}{0.509} &\multirow{4}{*}{0.263} &\multirow{4}{*}{0.498} \\
 &PixArt-$\alpha$ & & &29 &35 & & & \\
 &Stable Cascade & & &32 &32 & & & \\
 &Real Data & & &28 &36 & & & \\ \cline{1-9}
\multirow{4}{*}{Latent Consistency Model~\cite{luo2023latent}} 
 &Stable Diffusion v1.4 &\multirow{4}{*}{32} &\multirow{4}{*}{32} &12 &52 &\multirow{4}{*}{0.672} &\multirow{4}{*}{0.379} &\multirow{4}{*}{0.607} \\
 &PixArt-$\alpha$ & & &12 &52 & & & \\
 &Stable Cascade & & &22 &42 & & & \\
 &Real Data & & &10 &54 & & & \\ \cline{1-9}
 \multirow{4}{*}{PixArt-$\alpha$~\cite{chenpixart}} 
 &Stable Diffusion v1.4 &\multirow{4}{*}{32} &\multirow{4}{*}{32} &32 &32 &\multirow{4}{*}{0.500} &\multirow{4}{*}{0.286} &\multirow{4}{*}{0.500} \\
 &Latent Consistency Model & & &32 &32 & & & \\
 &Stable Cascade & & &32 &32 & & & \\
 &Real Data & & &32 &32 & & & \\ \cline{1-9}
 \multirow{4}{*}{Stable Cascade~\cite{perniaswurstchen}} 
 &Stable Diffusion v1.4 &\multirow{4}{*}{32} &\multirow{4}{*}{32} &16 &48 &\multirow{4}{*}{0.619} &\multirow{4}{*}{0.344} &\multirow{4}{*}{0.574} \\
 &Latent Consistency Model & & &32 &32 & & & \\
 &PixArt-$\alpha$ & & &26 &38 & & & \\
 &Real Data & & &16 &48 & & & \\ \cline{1-9}
 \textbf{Average Value} & & & & & &0.575 &0.318 &0.545 \\
  \Xhline{1.0pt}
\end{tabular}
\vspace{-1em}
\caption{The results of TrainProVe-Sim on CIFAR100.}
\vspace{-1.5em}
\label{tab:sim_cifar100}
\end{table*}

\begin{table*}
\small
\centering
\vspace{-3em}
\addtolength{\tabcolsep}{-1.5pt}
\begin{tabular}{ccccccccc}

\Xhline{1.0pt}
   \multicolumn{2}{c}{\textbf{The Suspect's Data Sources}} &\multirow{2}{*}{\textbf{TP}} &\multirow{2}{*}{\textbf{FP}} &\multirow{2}{*}{\textbf{FN}} &\multirow{2}{*}{\textbf{TN}} &\multirow{2}{*}{\textbf{Accuracy}} &\multirow{2}{*}{\textbf{F1 Score}} &\multirow{2}{*}{\textbf{AUROC}} \\ \cline{1-2}
 \textbf{The Defender's Generative Model} $G_d$ &\textbf{Data Sources Unrelated to} \( G_d \) & & & & & &  \\ \hline
\multirow{4}{*}{Stable Diffusion~\cite{rombach2022high} v1.4} 
 &Latent Consistency Model &\multirow{4}{*}{64} &\multirow{4}{*}{0} &64 &0 &\multirow{4}{*}{0.347} &\multirow{4}{*}{0.380} &\multirow{4}{*}{0.592} \\
 &PixArt-$\alpha$ & & &55 &9 & & & \\
 &Stable Cascade & & &58 &6 & & & \\
 &Real Data & & &32 &32 & & & \\ \cline{1-9}
\multirow{4}{*}{Latent Consistency Model~\cite{luo2023latent}} 
 &Stable Diffusion v1.4 &\multirow{4}{*}{64} &\multirow{4}{*}{0} &25 &39 &\multirow{4}{*}{0.628} &\multirow{4}{*}{0.518} &\multirow{4}{*}{0.768} \\
 &PixArt-$\alpha$ & & &32 &32 & & & \\
 &Stable Cascade & & &48 &16 & & & \\
 &Real Data & & &14 &50 & & & \\ \cline{1-9}
 \multirow{4}{*}{PixArt-$\alpha$~\cite{chenpixart}} 
 &Stable Diffusion v1.4 &\multirow{4}{*}{64} &\multirow{4}{*}{0} &60 &4 &\multirow{4}{*}{0.253} &\multirow{4}{*}{0.349} &\multirow{4}{*}{0.533} \\
 &Latent Consistency Model & & &64 &0 & & & \\
 &Stable Cascade & & &64 &0 & & & \\
 &Real Data & & &51 &13 & & & \\ \cline{1-9}
 \multirow{4}{*}{Stable Cascade~\cite{perniaswurstchen}} 
 &Stable Diffusion v1.4 &\multirow{4}{*}{64} &\multirow{4}{*}{0} &41 &23 &\multirow{4}{*}{0.363} &\multirow{4}{*}{0.386} &\multirow{4}{*}{0.602} \\
 &Latent Consistency Model & & &62 &2 & & & \\
 &PixArt-$\alpha$ & & &64 &0 & & & \\
 &Real Data & & &37 &27 & & & \\ \cline{1-9}
 \textbf{Average Value} & & & & & &0.398 &0.408 &0.624 \\
  \Xhline{1.0pt}
\end{tabular}
\vspace{-1em}
\caption{The results of TrainProVe-Ent on CIFAR100.}
\vspace{-1.5em}
\label{tab:ent_cifar100}
\end{table*}

\begin{table*}
\small
\centering
\addtolength{\tabcolsep}{-1.5pt}
\begin{tabular}{cccccccccccc}

\Xhline{1.0pt}
   \multicolumn{2}{c}{\textbf{The Suspect's Data Sources}} &\multirow{2}{*}{\textbf{TP}} &\multirow{2}{*}{\textbf{FP}} &\multirow{2}{*}{\textbf{FN}} &\multirow{2}{*}{\textbf{TN}} &\multirow{2}{*}{\textbf{Accuracy}} &\multirow{2}{*}{\textbf{F1 Score}} &\multirow{2}{*}{\textbf{AUROC}} \\ \cline{1-2}
 \textbf{The Defender's Generative Model} $G_d$ &\textbf{Data Sources Unrelated to} \( G_d \) & & & & & &  \\ \hline
\multirow{4}{*}{Stable Diffusion~\cite{rombach2022high} v1.4} 
 &Latent Consistency Model &\multirow{4}{*}{64} &\multirow{4}{*}{0} &0 &64 &\multirow{4}{*}{0.988} &\multirow{4}{*}{0.970} &\multirow{4}{*}{1.000} \\
 &PixArt-$\alpha$ & & &0 &64 & & & \\
 &Stable Cascade & & &0 &64 & & & \\
 &Real Data & & &0 &64 & & & \\ \cline{1-9}
\multirow{4}{*}{Latent Consistency Model~\cite{luo2023latent}} 
 &Stable Diffusion v1.4 &\multirow{4}{*}{64} &\multirow{4}{*}{0} &0 &64 &\multirow{4}{*}{1.000} &\multirow{4}{*}{1.000} &\multirow{4}{*}{1.000} \\
 &PixArt-$\alpha$ & & &0 &64 & & & \\
 &Stable Cascade & & &0 &64 & & & \\
 &Real Data & & &0 &64 & & & \\ \cline{1-9}
 \multirow{4}{*}{PixArt-$\alpha$~\cite{chenpixart}} 
 &Stable Diffusion v1.4 &\multirow{4}{*}{55} &\multirow{4}{*}{9} &0 &64 &\multirow{4}{*}{0.972} &\multirow{4}{*}{0.924} &\multirow{4}{*}{0.930} \\
 &Latent Consistency Model & & &0 &64 & & & \\
 &Stable Cascade & & &0 &64 & & & \\
 &Real Data & & &0 &64 & & & \\ \cline{1-9}
 \multirow{4}{*}{Stable Cascade~\cite{perniaswurstchen}} 
 &Stable Diffusion v1.4 &\multirow{4}{*}{63} &\multirow{4}{*}{1} &0 &64 &\multirow{4}{*}{0.997} &\multirow{4}{*}{0.992} &\multirow{4}{*}{0.992} \\
 &Latent Consistency Model & & &0 &64 & & & \\
 &PixArt-$\alpha$ & & &0 &64 & & & \\
 &Real Data & & &0 &64 & & & \\ \cline{1-9}
 \textbf{Average Value} & & & & & &0.992 &0.979 &0.981 \\
  \Xhline{1.0pt}
\end{tabular}
\vspace{-1em}
\caption{The results of TrainProVe on CIFAR100.}
\vspace{-1.5em}
\label{tab:acc_cifar100}
\end{table*}

\begin{table*}
\small
\centering
\addtolength{\tabcolsep}{-1.5pt}
\begin{tabular}{ccccccccc}

\Xhline{1.0pt}
   \multicolumn{2}{c}{\textbf{The Suspect's Data Sources}} &\multirow{2}{*}{\textbf{TP}} &\multirow{2}{*}{\textbf{FP}} &\multirow{2}{*}{\textbf{FN}} &\multirow{2}{*}{\textbf{TN}} &\multirow{2}{*}{\textbf{Accuracy}} &\multirow{2}{*}{\textbf{F1 Score}} &\multirow{2}{*}{\textbf{AUROC}} \\ \cline{1-2}
 \textbf{The Defender's Generative Model} $G_d$ &\textbf{Data Sources Unrelated to} \( G_d \) & & & & & &  \\ \hline
\multirow{4}{*}{Stable Diffusion~\cite{rombach2022high} v1.4} 
 &Latent Consistency Model &\multirow{4}{*}{32} &\multirow{4}{*}{32} &32 &32 &\multirow{4}{*}{0.500} &\multirow{4}{*}{0.286} &\multirow{4}{*}{0.500} \\
 &PixArt-$\alpha$ & & &32 &32 & & & \\
 &Stable Cascade & & &32 &32 & & & \\
 &Real Data & & &32 &32 & & & \\ \cline{1-9}
\multirow{4}{*}{Latent Consistency Model~\cite{luo2023latent}} 
 &Stable Diffusion v1.4 &\multirow{4}{*}{32} &\multirow{4}{*}{32} &32 &32 &\multirow{4}{*}{0.500} &\multirow{4}{*}{0.286} &\multirow{4}{*}{0.500} \\
 &PixArt-$\alpha$ & & &32 &32 & & & \\
 &Stable Cascade & & &32 &32 & & & \\
 &Real Data & & &32 &32 & & & \\ \cline{1-9}
 \multirow{4}{*}{PixArt-$\alpha$~\cite{chenpixart}} 
 &Stable Diffusion v1.4 &\multirow{4}{*}{32} &\multirow{4}{*}{32} &32 &32 &\multirow{4}{*}{0.500} &\multirow{4}{*}{0.286} &\multirow{4}{*}{0.500} \\
 &Latent Consistency Model & & &32 &32 & & & \\
 &Stable Cascade & & &32 &32 & & & \\
 &Real Data & & &32 &32 & & & \\ \cline{1-9}
 \multirow{4}{*}{Stable Cascade~\cite{perniaswurstchen}} 
 &Stable Diffusion v1.4 &\multirow{4}{*}{32} &\multirow{4}{*}{32} &32 &32 &\multirow{4}{*}{0.500} &\multirow{4}{*}{0.286} &\multirow{4}{*}{0.500} \\
 &Latent Consistency Model & & &32 &32 & & & \\
 &PixArt-$\alpha$ & & &32 &32 & & & \\
 &Real Data & & &32 &32 & & & \\ \cline{1-9}
 \textbf{Average Value} & & & & & &0.500 &0.286 &0.500 \\
  \Xhline{1.0pt}
\end{tabular}
\vspace{-1em}
\caption{The results of random classification on ImageNet-100.}
\vspace{-1.5em}
\label{tab:rand_imagenet100}
\end{table*}

\begin{table*}
\small
\centering
\vspace{-3em}
\addtolength{\tabcolsep}{-1.5pt}
\begin{tabular}{ccccccccc}

\Xhline{1.0pt}
   \multicolumn{2}{c}{\textbf{The Suspect's Data Sources}} &\multirow{2}{*}{\textbf{TP}} &\multirow{2}{*}{\textbf{FP}} &\multirow{2}{*}{\textbf{FN}} &\multirow{2}{*}{\textbf{TN}} &\multirow{2}{*}{\textbf{Accuracy}} &\multirow{2}{*}{\textbf{F1 Score}} &\multirow{2}{*}{\textbf{AUROC}} \\ \cline{1-2}
 \textbf{The Defender's Generative Model} $G_d$ &\textbf{Data Sources Unrelated to} \( G_d \) & & & & & &  \\ \hline
\multirow{4}{*}{Stable Diffusion~\cite{rombach2022high} v1.4} 
 &Latent Consistency Model &\multirow{4}{*}{32} &\multirow{4}{*}{32} &31 &33 &\multirow{4}{*}{0.684} &\multirow{4}{*}{0.388} &\multirow{4}{*}{0.615} \\
 &PixArt-$\alpha$ & & &0 &64 & & & \\
 &Stable Cascade & & &6 &58 & & & \\
 &Real Data & & &32 &32 & & & \\ \cline{1-9}
\multirow{4}{*}{Latent Consistency Model~\cite{luo2023latent}} 
 &Stable Diffusion v1.4 &\multirow{4}{*}{26} &\multirow{4}{*}{38} &15 &49 &\multirow{4}{*}{0.772} &\multirow{4}{*}{0.416} &\multirow{4}{*}{0.635} \\
 &PixArt-$\alpha$ & & &6 &58 & & & \\
 &Stable Cascade & & &1 &63 & & & \\
 &Real Data & & &13 &51 & & & \\ \cline{1-9}
 \multirow{4}{*}{PixArt-$\alpha$~\cite{chenpixart}} 
 &Stable Diffusion v1.4 &\multirow{4}{*}{64} &\multirow{4}{*}{0} &24 &40 &\multirow{4}{*}{0.753} &\multirow{4}{*}{0.618} &\multirow{4}{*}{0.846} \\
 &Latent Consistency Model & & &3 &61 & & & \\
 &Stable Cascade & & &32 &32 & & & \\
 &Real Data & & &20 &44 & & & \\ \cline{1-9}
 \multirow{4}{*}{Stable Cascade~\cite{perniaswurstchen}} 
 &Stable Diffusion v1.4 &\multirow{4}{*}{32} &\multirow{4}{*}{32} &32 &32 &\multirow{4}{*}{0.475} &\multirow{4}{*}{0.276} &\multirow{4}{*}{0.484} \\
 &Latent Consistency Model & & &40 &24 & & & \\
 &PixArt-$\alpha$ & & &32 &32 & & & \\
 &Real Data & & &32 &32 & & & \\ \cline{1-9}
 \textbf{Average Value} & & & & & &0.671 &0.425 &0.645 \\
  \Xhline{1.0pt}
\end{tabular}
\vspace{-1em}
\caption{The results of Han \etal's Method on ImageNet-100.}
\vspace{-1.5em}
\label{tab:hans_imagenet100}
\end{table*}

\begin{table*}
\small
\centering
\addtolength{\tabcolsep}{-1.5pt}
\begin{tabular}{cccccccccccc}

\Xhline{1.0pt}
   \multicolumn{2}{c}{\textbf{The Suspect's Data Sources}} &\multirow{2}{*}{\textbf{TP}} &\multirow{2}{*}{\textbf{FP}} &\multirow{2}{*}{\textbf{FN}} &\multirow{2}{*}{\textbf{TN}} &\multirow{2}{*}{\textbf{Accuracy}} &\multirow{2}{*}{\textbf{F1 Score}} &\multirow{2}{*}{\textbf{AUROC}} \\ \cline{1-2}
 \textbf{The Defender's Generative Model} $G_d$ &\textbf{Data Sources Unrelated to} \( G_d \) & & & & & &  \\ \hline
\multirow{4}{*}{Stable Diffusion~\cite{rombach2022high} v1.4} 
 &Latent Consistency Model &\multirow{4}{*}{32} &\multirow{4}{*}{32} &32 &32 &\multirow{4}{*}{0.506} &\multirow{4}{*}{0.288} &\multirow{4}{*}{0.504} \\
 &PixArt-$\alpha$ & & &30 &34 & & & \\
 &Stable Cascade & & &32 &32 & & & \\
 &Real Data & & &32 &32 & & & \\ \cline{1-9}
\multirow{4}{*}{Latent Consistency Model~\cite{luo2023latent}} 
 &Stable Diffusion v1.4 &\multirow{4}{*}{28} &\multirow{4}{*}{36} &16 &48 &\multirow{4}{*}{0.641} &\multirow{4}{*}{0.327} &\multirow{4}{*}{0.564} \\
 &PixArt-$\alpha$ & & &16 &48 & & & \\
 &Stable Cascade & & &26 &38 & & & \\
 &Real Data & & &21 &43 & & & \\ \cline{1-9}
 \multirow{4}{*}{PixArt-$\alpha$~\cite{chenpixart}} 
 &Stable Diffusion v1.4 &\multirow{4}{*}{32} &\multirow{4}{*}{32} &28 &36 &\multirow{4}{*}{0.519} &\multirow{4}{*}{0.294} &\multirow{4}{*}{0.512} \\
 &Latent Consistency Model & & &32 &32 & & & \\
 &Stable Cascade & & &32 &32 & & & \\
 &Real Data & & &30 &34 & & & \\ \cline{1-9}
 \multirow{4}{*}{Stable Cascade~\cite{perniaswurstchen}} 
 &Stable Diffusion v1.4 &\multirow{4}{*}{32} &\multirow{4}{*}{32} &26 &38 &\multirow{4}{*}{0.553} &\multirow{4}{*}{0.309} &\multirow{4}{*}{0.533} \\
 &Latent Consistency Model & & &31 &33 & & & \\
 &PixArt-$\alpha$ & & &28 &36 & & & \\
 &Real Data & & &26 &38 & & & \\ \cline{1-9}
 \textbf{Average Value} & & & & & &0.555 &0.305 &0.528 \\
  \Xhline{1.0pt}
\end{tabular}
\vspace{-1em}
\caption{The results of TrainProVe-Sim on ImageNet-100.}
\vspace{-1.5em}
\label{tab:sim_imagenet100}
\end{table*}

\begin{table*}
\small
\centering
\addtolength{\tabcolsep}{-1.5pt}
\begin{tabular}{ccccccccc}

\Xhline{1.0pt}
   \multicolumn{2}{c}{\textbf{The Suspect's Data Sources}} &\multirow{2}{*}{\textbf{TP}} &\multirow{2}{*}{\textbf{FP}} &\multirow{2}{*}{\textbf{FN}} &\multirow{2}{*}{\textbf{TN}} &\multirow{2}{*}{\textbf{Accuracy}} &\multirow{2}{*}{\textbf{F1 Score}} &\multirow{2}{*}{\textbf{AUROC}} \\ \cline{1-2}
 \textbf{The Defender's Generative Model} $G_d$ &\textbf{Data Sources Unrelated to} \( G_d \) & & & & & &  \\ \hline
\multirow{4}{*}{Stable Diffusion~\cite{rombach2022high} v1.4} 
 &Latent Consistency Model &\multirow{4}{*}{64} &\multirow{4}{*}{0} &53 &11 &\multirow{4}{*}{0.266} &\multirow{4}{*}{0.353} &\multirow{4}{*}{0.541} \\
 &PixArt-$\alpha$ & & &60 &4 & & & \\
 &Stable Cascade & & &62 &2 & & & \\
 &Real Data & & &60 &4 & & & \\ \cline{1-9}
\multirow{4}{*}{Latent Consistency Model~\cite{luo2023latent}} 
 &Stable Diffusion v1.4 &\multirow{4}{*}{51} &\multirow{4}{*}{13} &41 &23 &\multirow{4}{*}{0.441} &\multirow{4}{*}{0.363} &\multirow{4}{*}{0.574} \\
 &PixArt-$\alpha$ & & &48 &16 & & & \\
 &Stable Cascade & & &37 &27 & & & \\
 &Real Data & & &40 &24 & & & \\ \cline{1-9}
 \multirow{4}{*}{PixArt-$\alpha$~\cite{chenpixart}} 
 &Stable Diffusion v1.4 &\multirow{4}{*}{60} &\multirow{4}{*}{4} &56 &8 &\multirow{4}{*}{0.338} &\multirow{4}{*}{0.361} &\multirow{4}{*}{0.563} \\
 &Latent Consistency Model & & &41 &23 & & & \\
 &Stable Cascade & & &59 &5 & & & \\
 &Real Data & & &52 &12 & & & \\ \cline{1-9}
 \multirow{4}{*}{Stable Cascade~\cite{perniaswurstchen}} 
 &Stable Diffusion v1.4 &\multirow{4}{*}{60} &\multirow{4}{*}{4} &53 &11 &\multirow{4}{*}{0.381} &\multirow{4}{*}{0.377} &\multirow{4}{*}{0.590} \\
 &Latent Consistency Model & & &36 &28 & & & \\
 &PixArt-$\alpha$ & & &53 &11 & & & \\
 &Real Data & & &52 &12 & & & \\ \cline{1-9}
 \textbf{Average Value} & & & & & &0.356 &0.364 &0.567 \\
  \Xhline{1.0pt}
\end{tabular}
\vspace{-1em}
\caption{The results of TrainProVe-Ent on ImageNet-100.}
\vspace{-1.5em}
\label{tab:ent_imagenet100}
\end{table*}

\begin{table*}
\small
\centering
\vspace{-3em}
\addtolength{\tabcolsep}{-1.5pt}
\begin{tabular}{ccccccccc}

\Xhline{1.0pt}
   \multicolumn{2}{c}{\textbf{The Suspect's Data Sources}} &\multirow{2}{*}{\textbf{TP}} &\multirow{2}{*}{\textbf{FP}} &\multirow{2}{*}{\textbf{FN}} &\multirow{2}{*}{\textbf{TN}} &\multirow{2}{*}{\textbf{Accuracy}} &\multirow{2}{*}{\textbf{F1 Score}} &\multirow{2}{*}{\textbf{AUROC}} \\ \cline{1-2}
 \textbf{The Defender's Generative Model} $G_d$ &\textbf{Data Sources Unrelated to} \( G_d \) & & & & & &  \\ \hline
\multirow{4}{*}{Stable Diffusion~\cite{rombach2022high} v1.4} 
 &Latent Consistency Model &\multirow{4}{*}{64} &\multirow{4}{*}{0} &0 &64 &\multirow{4}{*}{0.819} &\multirow{4}{*}{0.688} &\multirow{4}{*}{0.887} \\
 &PixArt-$\alpha$ & & &0 &64 & & & \\
 &Stable Cascade & & &11 &53 & & & \\
 &Real Data & & &47 &17 & & & \\ \cline{1-9}
\multirow{4}{*}{Latent Consistency Model~\cite{luo2023latent}} 
 &Stable Diffusion v1.4 &\multirow{4}{*}{2} &\multirow{4}{*}{62} &0 &64 &\multirow{4}{*}{0.806} &\multirow{4}{*}{0.061} &\multirow{4}{*}{0.517} \\
 &PixArt-$\alpha$ & & &0 &64 & & & \\
 &Stable Cascade & & &0 &64 & & & \\
 &Real Data & & &0 &64 & & & \\ \cline{1-9}
 \multirow{4}{*}{PixArt-$\alpha$~\cite{chenpixart}} 
 &Stable Diffusion v1.4 &\multirow{4}{*}{64} &\multirow{4}{*}{0} &29 &35 &\multirow{4}{*}{0.734} &\multirow{4}{*}{0.601} &\multirow{4}{*}{0.834} \\
 &Latent Consistency Model & & &0 &64 & & & \\
 &Stable Cascade & & &24 &40 & & & \\
 &Real Data & & &32 &32 & & & \\ \cline{1-9}
 \multirow{4}{*}{Stable Cascade~\cite{perniaswurstchen}} 
 &Stable Diffusion v1.4 &\multirow{4}{*}{59} &\multirow{4}{*}{5} &32 &32 &\multirow{4}{*}{0.784} &\multirow{4}{*}{0.631} &\multirow{4}{*}{0.836} \\
 &Latent Consistency Model & & &0 &64 & & & \\
 &PixArt-$\alpha$ & & &0 &64 & & & \\
 &Real Data & & &32 &32 & & & \\ \cline{1-9}
 \textbf{Average Value} & & & & & &0.786 &0.495 &0.769 \\
  \Xhline{1.0pt}
\end{tabular}
\vspace{-1em}
\caption{The results of TrainProVe on ImageNet-100.}
\vspace{-1.5em}
\label{tab:acc_imagenet100}
\end{table*}

\end{document}